\newtheorem{assumption}{Assumption}
\newtheorem{lemma}{Lemma}
\newtheorem{theorem}{Theorem}
\newtheorem{definition}{Definition}
\newtheorem{proposition}{Proposition}
\def\eqref#1{equation~\ref{#1}}
\def\1{\bm{1}}
\DeclareMathAlphabet{\mathsfit}{\encodingdefault}{\sfdefault}{m}{sl}
\SetMathAlphabet{\mathsfit}{bold}{\encodingdefault}{\sfdefault}{bx}{n}
\algrenewcommand\algorithmicrequire{\textbf{Input:}}
\algrenewcommand\algorithmicensure{\textbf{Output:}}
\title{Revisiting Meta-Learning with Noisy Labels: Reweighting Dynamics and Theoretical Guarantees}
\author{
  \textbf{Yiming Zhang}\textsuperscript{*} \quad
  \textbf{Chester Holtz}\textsuperscript{*} \quad
  \textbf{Gal Mishne} \quad
  \textbf{Alex Cloninger} \\
  University of California, San Diego \\
  \textsuperscript{*}\,Equal contribution
}
\begin{document}

\maketitle

\begin{abstract}
Learning with noisy labels remains challenging because over-parameterized networks memorize corrupted supervision. Meta-learning–based sample reweighting mitigates this by using a small clean subset to guide training, yet its behavior and training dynamics lack theoretical understanding. We provide a rigorous theoretical analysis of meta-reweighting under label noise and show that its training trajectory unfolds in three phases: (i) an alignment phase that amplifies examples consistent with a clean subset and suppresses conflicting ones; (ii) a filtering phase driving noisy example weights toward zero until the clean subset loss plateaus; and (iii) a post-filtering phase in which noise filtration becomes perturbation-sensitive. The mechanism is a similarity-weighted coupling between training and clean subset signals together with clean subset training loss contraction; in the post-filtering where the clean-subset loss is sufficiently small, the coupling term vanishes and meta-reweighting loses discriminatory power. Guided by this analysis, we propose a lightweight surrogate for meta reweighting that integrates mean-centering, row shifting, and label-signed modulation, yielding more stable performance while avoiding expensive bi-level optimization. Across synthetic and real noisy-label benchmarks, our method consistently outperforms strong reweighting/selection baselines.
\end{abstract}

\section{Introduction}

Driven by the rapid growth of accessible data, deep neural networks (DNNs) now achieve state-of-the-art performance across many tasks \citep{he2016deep, krizhevsky2012imagenet}. Yet large datasets are often annotated by humans or scraped from the web, making labels inexpensive but error-prone in practice \citep{yu2018learning, cheng2020learning}. Such label noise—stemming from ambiguity, bias, or systematic mistakes—poses a serious challenge because modern DNNs are expressive enough to memorize arbitrary label assignments \citep{zhang2016understanding}. Consequently, even modest noise levels can mislead training and harm generalization through overfitting to corrupted supervision \citep{zhang2016understanding, arpit2017closer}. Developing and understanding learning methods that remain reliable under noisy labels is therefore a central and urgent problem.

To curb memorization of corrupted labels, a growing literature develops methods for noisy label learning. Among existing strategies, sample reweighting and selection (i.e., ``noise cleansing") are widely used for their strong empirical performance and compatibility with contrastive and unsupervised learning frameworks \citep{li2020dividemix, miao2024learning}. Sample reweighting/selection down-weights or filters suspicious examples using signals such as per-example loss \citep{gui2021towards}, inter-model agreement \citep{wei2020combating}, or cross-validation/large-loss heuristics (e.g., periodically partitioning data and removing high-loss instances) \citep{chen2019understanding}. Within the sample reweighting methods, meta-learning–based reweighting has emerged as a prominent approach: a small trusted validation set guides a bilevel optimization that learns example weights to improve downstream generalization \citep{ren2018learning, shu2019meta, holtz2022learning}. However, scalability remains a key bottleneck since computing hypergradients via unrolling or implicit differentiation incurs substantial memory and compute overhead on modern architectures. Additionally, the theoretical underpinnings that drive the success of meta-learning-based reweighting are still incomplete. It is not yet fully understood when and why meta-reweighting provably suppresses noisy examples, how the quality and bias of the validation set affects generalization, or how inner loop hyperparameters converge. 

Existing convergence results for gradient-based meta-learning guarantee nonasymptotic convergence to first-order stationary points (FOSPs) (e.g., \citep{shu2019meta, ji2022theoretical}). Such optimization guarantees alone do not preclude overfitting to noisy labels. In particular, \citet{zhai2022understanding} shows that, under overparameterization and standard losses, a broad class of sample-reweighting schemes inherits an implicit bias of empirical risk minimization (ERM) and may therefore offer no improvement over ERM. That analysis, however, assumes per-example weights converge to strictly positive limits and ignores the regularization effect introduced by the trusted meta clean set that guides meta-reweighting. These gaps require a sharper theory of meta-reweighting under label noise. In particular, we demonstrate that a careful analysis that models the interaction between weight dynamics and meta-set bias informs the design of more efficient and robust algorithms. A detailed discussion of related work is provided in Appendix~\ref{sec:related}.

In this work, we analyze the training dynamics of meta-reweighting under noisy labels. We show that when the reweighting step size scales inversely proportional to the classifier training step size, the classifier is guaranteed to separate clean from noisy data. More concretely, under standard over-parameterization and stable step sizes, training follows a three-phase trajectory: an early stage that boosts examples aligned with the clean subset signal while suppressing conflicting ones; a filtering stage during which the weights polarize so clean samples dominate, and which terminates when the clean subset error has converged (within a small tolerance); and a post-filtering stage where the signal naturally tapers and polarization becomes susceptible to perturbations. The effect is driven by a signed, similarity-weighted coupling between training and clean subset signals together with residual contraction. Our analysis clarifies when and how label filtering occurs and offers practical guidance for hyperparameters and the design of a particular training methodology. We demonstrate this by introducing a computationally light surrogate for the bilevel update. Specifically, our surrogate retains the signed, similarity-weighted aggregation, eliminates the need for backpropagation through inner-loop updates, and mean-centers the similarity Gram matrix to remove global bias. Empirically, on both synthetic and real-world datasets, the method consistently improves test accuracy over common baselines, demonstrating efficient noisy-label robust learning. We summarize our key contributions as follows:
\begin{itemize}
    \item \textbf{Theoretical analysis.} We rigorously analyze the training dynamics of meta reweighting and characterize three stages. (1) Sample weight alignment stage where noisy weights decrease. (2) Noisy sample weights remain at zero while the loss of the clean subset converges. (3) The weights start to become perturbed and no longer filter out noisy samples. 

    \item \textbf{Algorithmic surrogate.}
    Guided by our theory, we introduce a lightweight surrogate that updates sample weights using penultimate-layer feature similarity in place of the tangent-kernel similarity. To better align with the theoretical suggestions, we incorporate (i) mean-centering and row-shifting to remove global bias, (ii) label-signed modulation—i.e., we set the sign of each pairwise similarity by label agreement. 

    \item \textbf{Empirical evaluation.} We empirically show that our method achieves higher accuracy than sample reweighting/selection baselines across multiple benchmarks and noise settings.
\end{itemize}
\subsection{Notations}
Let $\mathcal{X}\subset\mathbb{R}^d$ and $\mathcal{Y}\subset\mathbb{R}$ denote the input and output spaces, respectively, and assume $\|\mathbf{x}\|_2\le1$ for all $\mathbf{x}\in\mathcal{X}$. 
We observe $\mathcal{D}=\{\mathbf{z}_i=(\mathbf{x}_i,y_i)\}_{i=1}^n$ drawn i.i.d.\ from $P$ over $\mathcal{X}\times\mathcal{Y}$. 
Let $\mathbf{X}=(\mathbf{x}_1,\ldots,\mathbf{x}_n)\in\mathbb{R}^{d\times n}$, the (hypothetical) clean labels $\mathbf{Y}^{\ast}=(y_1^{\ast},\ldots,y_n^{\ast})^\top\in\mathbb{R}^n$, and the observed (possibly noisy) labels $\mathbf{Y}=(y_1,\ldots,y_n)^\top\in\mathbb{R}^n$. In this work, we refer to any sample $\mathbf{x}_i$ with $y_i \neq y_i^{\ast}$ as a \emph{noisy sample}, and to any sample with $y_i = y_i^{\ast}$ as a \emph{clean sample}.
For any mapping $g:\mathcal{X}\to\mathbb{R}^m$, we overload notation as $g(\mathbf{X})=(g(\mathbf{x}_1),\ldots,g(\mathbf{x}_n))\in\mathbb{R}^{m\times n}$ (a column vector when $m=1$).

Let $f_\theta:\mathcal{X}\to\mathbb{R}$ be a neural network with parameters $\theta$ and loss $\ell:\mathbb{R}\times\mathcal{Y}\to\mathbb{R}_+$. Write $f_i(\theta):=f_\theta(\mathbf{x}_i)$ and the per-example loss $l_i(\theta):=\ell\big(f_\theta(\mathbf{x}_i),y_i\big)$. For nonnegative weights $w=(w_1,\ldots,w_n)^\top\in\mathbb{R}_{\ge0}^n$, define the weighted empirical risk $L(\theta,w):=\tfrac{1}{n}\sum_{i=1}^n w_i\,l_i(\theta)$, which reduces to standard ERM when $w_i\equiv1$. In the analysis we take $\mathcal{Y}=\{+1,-1\}$ and the squared loss $\ell(\hat y,y)=\tfrac{1}{2}(\hat y-y)^2$ with $\hat y\in\mathbb{R}$ and $y\in\{+1,-1\}$. Let $u(\theta)=(u_1(\theta),\ldots,u_n(\theta))^\top\in\mathbb{R}^n$ denote residuals with $u_i(\theta):=f_\theta(\mathbf{x}_i)-y_i$, and let $J(\theta):=\nabla_\theta f_\theta(X)\in\mathbb{R}^{p\times n}$ denote the Jacobian. For the clean subset, we use $f_i^v(\theta)$, $u^v(\theta)$, and $J^v(\theta)$ analogously.

 \section{Background and Meta Reweighting Framework}
\label{sec:meta_rw}
There has been a growing line of work on \emph{meta reweighting} for robust training. 
For example, \cite{ren2018learning} propose a framework to learn per-example scalar weights by differentiating a validation objective through a one-step proxy of the training update, thereby steering optimization toward samples that improve held-out performance (e.g., under label noise or class imbalance). \cite{shu2019meta} parametrize and learn a smooth weighting function by optimizing a meta-objective on a clean validation set, mapping example statistics (such as the loss suffered on each training example) to adaptive weights.
Motivated by the implicit bilevel mechanism of these methods, we adopt a formulation that treats $w \in [0,1]^n$ as an optimizable parameter. At each outer iteration $t$, we (i) update $w$ by performing one inner gradient steps on the validation objective evaluated at a one-step proxy $\widehat\theta_{t+1}(w)$ of the parameter update, and (ii) update $\theta$ once using the resulting weights $w_{t+1}$. This abstraction retains the essential mechanism of meta reweighting without method-specific design choices and serves as the basis for our analysis in the next sections.
% Motivated by the common core of these approaches, and for theoretical simplicity, we adopt a minimal bilevel formulation that treats $w\in[0,1]^n$ as an optimizable variable: in each outer iteration $t$, we (i) update $w$ by $\widetilde T$ inner gradient steps on the validation objective evaluated at a one-step proxy $\widehat\theta_{t+1}(w)$ of the parameter update, and then (ii) perform a single outer update of $\theta$ using the resulting weights $w_{t+1}$. 
% This abstraction captures the essential mechanism of meta reweighting while avoiding method-specific architectural choices, and it will serve as the basis for our analysis in the next sections.

\noindent \textbf{Pseudo update step.}
At outer iteration $t$, given model parameters $\theta_t$ and example weights $w_t=(w_t^{\,1},\dots,w_t^{\,n})$, we consider the following one-step proxy that maps weights to a tentative parameter update:
\[
\widehat{\theta}_{t+1}(w)
\;:=\;
\theta_t \;-\; \eta\, \nabla_{\theta}\!\Big(\sum_{i=1}^n  w^{\,i}\, l_{i}(\theta_t)\Big),
\]
where $l_i(\theta)$ denotes the training loss of example $i$ and $\eta>0$ is the classifier learning rate. 
Using the clean subset losses $\{l^v_j(\cdot)\}_{j=1}^m$, define the meta objective as a function of the weights
\[
\mathcal L_{\mathrm{val}}(w)
\;:=\;
\frac{1}{m}\sum_{j=1}^m l^v_j\!\big(\widehat{\theta}_{t+1}(w)\big).
\]
% \noindent \textbf{Reweighting parameter update.}
% Initialize $w_{t,0}:=w_t$ and perform $\widetilde T$ gradient steps on $\mathcal L_{\mathrm{val}}$:
% \[
% w^{\,i}_{t,k+1}
% \;=\;
% w^{\,i}_{t,k}
% \;-\;
% \alpha\,
% \frac{\partial \mathcal L_{\mathrm{val}}}{\partial w^{\,i}}\!\big(w_{t,k}\big),
% \qquad
% k=0,1,\dots,\widetilde T-1,\ \ i=1,\dots,n,
% \]
% with reweighting learning rate $\alpha>0$. We require the weights to lie in $[0,1]$; therefore, after each step we apply elementwise clipping:
% \[
% w^{\,i}_{t,k+1}\ \leftarrow\ \min\big\{\,1,\ \max\{\,0,\ w^{\,i}_{t,k+1}\,\}\big\},
% \qquad i=1,\dots,n.
% \]
% After $\widetilde T$ inner steps, set
% \[
% w_{t+1}\;:=\;w_{t,\widetilde T}.
% \]
\noindent \textbf{Reweighting parameter update.}
Given $w_t$,  take one gradient step on $\mathcal L_{\mathrm{val}}$ with step size $\alpha>0$:
\[
\widetilde w_{t+1}
\;=\;
w_t \;-\; \alpha\, \nabla_w \mathcal L_{\mathrm{val}}(w_t).
\]
Equivalently, unrolling the pseudo update step by the chain rule yields the following scheme:
\begin{align}
w^i_{t+1} = w^i_{t} - \alpha \frac{\partial}{\partial w^i}( \sum_{j=1}^m l^v_j(\theta_t-\eta \nabla_\theta \sum_{k=1}^n  w_t^k l_{k}(\theta_t))) ,
\qquad i=1,\dots,n.
 \label{alg:meta}
\end{align} 
In this work, we choose the reweighting stepsize as $\alpha=\beta/\eta$ with a fixed constant $\beta>0$, so that the effective product $\alpha\eta$ remains constant; indeed, the $w$-update in Equation~(\ref{alg:meta}) depends on $\eta$ only through this product. We require the weights to lie in $[0,1]$. Therefore, after each step we apply elementwise clipping:
\[
w_{t+1}^{\,i}
\;=\;
\min\!\big\{\,1,\ \max\{\,0,\ \widetilde w_{t+1}^{\,i}\,\}\big\},
\qquad i=1,\dots,n.
\]
% (Equivalently, $w_{t+1}=\Pi_{[0,1]^n}(\widetilde w_{t+1})$.)

% (equivalently, $w_{t+1}^{\,i} \leftarrow \min\{1,\max\{0,\; w_t^{\,i} - \alpha\, \tfrac{\partial \mathcal L_{\mathrm{val}}}{\partial w^{\,i}}(w_t)\}\}$ for all $i$).

\noindent \textbf{Classifier parameter update.}
Using the updated weights $w_{t+1}$, perform one classifier update of the model parameters:
\begin{equation}\label{eq:outer-update}
\theta_{t+1}
= \theta_t - \eta\, \nabla_{\theta}\!\Big(\sum_{i=1}^n  w_{t+1}^{\,i}\, l_{i}(\theta_t)\Big).
\end{equation}

% Equivalently, unrolling reweighting update step yields the following scheme:

% Initialize the inner-loop weights by $w_{t,0}:=w_t$.
% For $s=0,\ldots,\widetilde T-1$ and for each $i\in\{1,\ldots,n\}$, perform the update
% \begin{align}
% w^{\,i}_{t,s+1}
% \,=\,
% w^{\,i}_{t,s}
% \;-\;
% \alpha\,
% \frac{\partial}{\partial w^{\,i}}
% \!\left(
% \frac{1}{m}\sum_{j=1}^m 
% l^v_j\!\Big(\theta_t - \eta\, \nabla_\theta \sum_{r=1}^n  w_{t,s}^{\,r}\, l_{r}(\theta_t)\Big)
% \right).
% \label{alg:meta}
% \end{align}
% Upon completion of the $\widetilde T$ inner steps, set
% $w^{\,i}_{t+1}:=w^{\,i}_{t,\widetilde T}$ for all $i=1,\ldots,n$,
% and update the model parameters in the same way as in \ref{eq:outer-update}.
\section{Theoretical Analysis}
\label{sec:theoretical}
In this section, we will study the learning dynamics of the meta reweighting algorithm in Section \ref{sec:meta_rw} under certain assumptions. Specifically, we will prove that at the early phase of training, the weights of noisy samples decrease to zero while weights of clean samples increase to one. Then, the weights will stay constant until the training loss of clean subset converges to $O(\eta+\widetilde d^{-1/4})$. Finally, since the training loss of the clean subset is small, the weight gradient signal is overwhelmed by approximation errors. As a result, the weight update starts to be perturbed and noisy sample weights could be potentially positive.
 
 We consider training dynamics of sufficiently wide fully-connected neural networks. Specifically, we define the output of the $l+1$-th hidden layer of a $L$-layer neural network to be
$$
\boldsymbol{h}^{l+1}=\frac{A^l}{\sqrt{d_l}} \boldsymbol{x}^l+\boldsymbol{b}^l \quad \text { and } \quad \boldsymbol{x}^{l+1}=\sigma\left(\boldsymbol{h}^{l+1}\right) \quad(l=0, \cdots, L)
$$
where $\sigma$ is a non-linear activation function, $A^l \in \mathbb{R}^{d_{l+1} \times d_l}$ and $\bm b^l\in\mathbb{R}^{d_{l+1}}$. Here $d_0=d$. The parameter vector $\theta$ consists of $A^0, \cdots, A^L$ and $\bm b^0, \cdots, \bm b^L$ ( $\theta$ is the concatenation of all flattened weights and biases). The final output is $f(\boldsymbol{x})=\boldsymbol{h}^{L+1}$. And let the neural network be initialized as
$$
\left\{\begin{array} { l } 
{ A _ { i , j } ^ { l  } \sim \mathcal { N } ( 0 , 1 ) } \\
{ b _ { j } ^ { l } \sim \mathcal { N } ( 0 , 1 ) }
\end{array} \quad ( l = 0 , \cdots , L - 1 ) \quad \text { and } \quad \left\{\begin{array}{l}
A_{i, j}^{L}=0 \\
b_j^{L}=0
\end{array}\right.\right.
$$
We also rely on the following assumption for the activation function:
\begin{assumption}
\label{assum:activation}
$\sigma$ is differentiable everywhere. Both $\sigma$ and its first-order derivative $\dot{\sigma} $ are Lipschitz.
\end{assumption}

\textbf{Wide Neural Networks:}
The \textit{neural tangent kernel} (NTK) of model $f_\theta$ is defined as $K^{(0)}(\bm{x}, \bm x^{\prime}) := \nabla_\theta f^{(0)}(\bm{x})^{\top} \nabla_\theta f^{(0)}\left(\bm{x}^{\prime}\right)$. We denote the Gram matrix by $\bm K:= K^{0}(\bm X, \bm{X} ) \in \mathbb{R}^{n \times n}$. When the width of $f_\theta$ increases to infinity, the NTK matrix converges to a deterministic kernel matrix:
\begin{lemma}[\cite{zhai2022understanding}]
    If $\sigma$ is Lipschitz and $d_l \rightarrow \infty$ for $l=1, \cdots, L$ sequentially, then $K^{(0)}\left(\boldsymbol{x}, \boldsymbol{x}^{\prime}\right)$ converges in probability to a deterministic limiting kernel $ K\left(\boldsymbol{x}, \boldsymbol{x}^{\prime}\right)$ and the limiting kernel Gram matrix $\bm K: = K(\bm X, \bm{X}) \in \mathbb{R}^{n \times n}$ is a positive definite symmetric matrix almost surely.
\end{lemma}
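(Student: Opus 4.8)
The plan is to prove the two assertions separately: (a) that the random NTK Gram matrix $\bm K$ converges in probability to a deterministic matrix $K(\bm X,\bm X)$ as the hidden widths $d_1,\dots,d_L$ are sent to infinity one after another; and (b) that this limiting matrix is symmetric and strictly positive definite. Taking the width limits \emph{sequentially} is what makes (a) tractable: once the first $l-1$ limits have been taken, the units of layer $l$ are conditionally i.i.d.\ given the earlier layers, so each further limit is a plain law of large numbers (LLN) rather than a joint central-limit/tightness argument.

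For (a) I would first treat the forward pass. By induction on $l$, show that the pre-activation vectors $\big(\bm h^l(\bm x_1),\dots,\bm h^l(\bm x_n)\big)$ converge, in the sequential limit, to a centered Gaussian field whose covariance $\Sigma^{(l)}(\bm x,\bm x')$ is given by the dual-activation recursion $\Sigma^{(l+1)}(\bm x,\bm x') = \mathbb{E}\big[\sigma(g)\sigma(g')\big] + 1$, where $(g,g')$ is the bivariate Gaussian with the appropriate $2\times 2$ block of $\Sigma^{(l)}$; the base case is the CLT applied to $\bm h^1 = A^0\bm x/\sqrt d + \bm b^0$ (so $\Sigma^{(1)}(\bm x,\bm x')=\langle\bm x,\bm x'\rangle/d+1$). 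The inductive step conditions on layer $l$: as $d_l\to\infty$ the empirical second moments $\tfrac{1}{d_l}\sum_k \sigma\!\big(h^l_k(\bm x_i)\big)\sigma\!\big(h^l_k(\bm x_j)\big)$ concentrate around $\Sigma^{(l+1)}(\bm x_i,\bm x_j)-1$ by the LLN — here the Lipschitz assumption on $\sigma$ controls the moments of the summands — and conditionally on these moments the next layer's pre-activations are jointly Gaussian.

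Next, the backward pass and the kernel recursion. Decompose $\nabla_\theta f_\theta(\bm x)$ into the blocks corresponding to each layer's $(A^l,\bm b^l)$, so that $K^{(0)}(\bm x_i,\bm x_j)$ is a sum over $l$ of block inner products; each block factorizes, in the limit, into a forward Gram term ($\Sigma^{(l)}$ or the derivative kernel $\dot\Sigma^{(l+1)}(\bm x,\bm x') = \mathbb{E}[\dot\sigma(g)\dot\sigma(g')]$) times a backward Gram term built from the back-propagated gradients, and the same sequential-LLN argument — now also using that $\dot\sigma$ is Lipschitz — shows every factor converges in probability. Assembling the pieces gives the standard layerwise NTK recursion, which exhibits $K(\bm X,\bm X)$ as a finite sum of Hadamard products of the matrices $\Sigma^{(l)}(\bm X,\bm X)$ and $\dot\Sigma^{(l)}(\bm X,\bm X)$. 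The zero initialization of the output layer ($A^L=0$, $\bm b^L=0$) makes $f^{(0)}\equiv 0$ and renders the gradient fed into the last hidden layer deterministic; it adds no randomness and only fixes a convenient normalization.

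For (b), positive definiteness is the main obstacle, and where the genuinely nontrivial input enters. Symmetry is immediate. For strict positive definiteness, use that each $\Sigma^{(l)}(\bm X,\bm X)$ and $\dot\Sigma^{(l)}(\bm X,\bm X)$ is positive semidefinite (being the Gram matrix of an $L^2$ feature map) with strictly positive diagonal, so by the Schur product theorem the Hadamard-product sum is PSD, and in fact strictly PD as soon as one summand is strictly PD. It therefore suffices to show some $\Sigma^{(l)}(\bm X,\bm X)$ is strictly PD, which holds when the inputs $\bm x_1,\dots,\bm x_n$ are distinct and $\sigma$ is non-polynomial: expanding $\sigma$ in Hermite polynomials, $\Sigma^{(l)}$ receives contributions from infinitely many Hermite components and the induced feature map separates distinct points, so no nonzero vector can lie in its kernel. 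Since the lemma is quoted from prior work, I would either invoke this conclusion directly from \cite{zhai2022understanding} and the classical NTK references, or, for a self-contained account, reproduce the Hermite-expansion/linear-independence argument; either way, part (a) is a routine sequence of LLNs while part (b) is the step where the non-degeneracy of the data and the non-polynomial character of the activation are really used.
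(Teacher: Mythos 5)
This lemma is \emph{cited}, not proved, in the paper: both in \S3 and in Appendix~B it is invoked as an imported result from \citet{zhai2022understanding} (which in turn rests on the standard NTK convergence theorems of Jacot et al., Lee et al., and Arora et al.), so there is no in-paper proof against which to compare your argument. Taken on its own terms, your two-part sketch — sequential law-of-large-numbers induction for the forward Gaussian-process recursion and the layer-wise NTK decomposition for convergence, then Schur product plus Hermite expansion for positive definiteness — is the correct and standard route, and your observation that the zero-initialized output layer ($A^L=\bm b^L=0$) removes randomness from the final block and normalizes $f^{(0)}\equiv 0$ is a nice detail that is often left implicit.

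One genuine gap you should be careful about, and which you partially flag yourself: the positive-definiteness clause does not follow from the lemma's stated hypotheses alone. As written, the hypotheses are only that $\sigma$ is Lipschitz and the widths diverge sequentially; your argument for strict PD additionally uses (i) distinct inputs and (ii) non-polynomial $\sigma$, neither of which appears in the lemma statement (or in Assumption~\ref{assum:activation}, which only requires $\sigma,\dot\sigma$ Lipschitz and $\sigma$ differentiable). The ``almost surely'' quantifier is doing work here — it presumably refers to the data draw from a continuous $P$, which gives distinct inputs — but the non-polynomial condition (or some substitute, such as the bias terms forcing a strictly PD $\dot\Sigma$ contribution) must be supplied from the original reference. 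The paper sidesteps this by separately imposing $\lambda_{\min}>0$ in Assumption~\ref{assum:net_data}(iii), which makes the PD clause of this lemma redundant for the downstream analysis. So your sketch is correct modulo importing the precise non-degeneracy hypothesis from \citet{zhai2022understanding}, and you are right that the convergence half is routine while the PD half is where the real content of the data/activation assumptions lives.
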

We denote the largest and smallest eigenvalues of $\bm K$ by $\lambda_{\max}$ and $\lambda_{\min}$. Throughout this section, the network and data satisfy the following assumptions:
\begin{assumption} \label{assum:net_data}  (i) $d_1=\cdots=d_L=\widetilde d$; (ii) $\{\nabla_\theta f^{(0)}(\boldsymbol{x}_i)\}_{i=1}^n$ are linearly independent; and (iii) $\lambda_{\min}>0$. \end{assumption}

For simplicity, we consider the data distribution support with the following property:
\begin{assumption}
\label{assum:kernel_signs}
If $\bm x, \bm x^{\prime}$ belongs to the same class, $K^{(0)}(\bm x, \bm x^{\prime}) > \gamma >0$. Otherwise $K^{(0)}(\bm x, \bm x^{\prime}) < -\gamma< 0$.
\end{assumption}
We will explain how this assumption can be relaxed in Section \ref{sec:kernel value distribution} and provide empirical evidence to demonstrate that this assumption and its relaxation are satisfied in practice.

\subsection{Learning Dynamics}
We now state our main result on the learning dynamics of the meta reweighting algorithm.
Under the assumptions above and with an appropriately chosen learning rate (namely, $\eta\le\widetilde\eta$ and reweighting step size scales as $\alpha\eta = \beta$), the trajectory exhibits, with high probability, a three-phase behavior: an \emph{early phase} in which clean examples are upweighted and noisy examples are downweighted; a \emph{filtering phase} in which the weights polarize to clean = 1 and noisy = 0
%$(\text{clean}=1,\ \text{noisy}=0)$ 
and the validation residual converges toward zero; and a \emph{post-filtering phase} in which the residual-driven signal weakens and noisy sample weights are no longer guaranteed to be zero. The formal statement follows.
\begin{theorem}
\label{thm: main}
Let $f_{\theta_t}$ denote a fully connected neural network satisfying Assumptions~\ref{assum:activation} and \ref{assum:net_data}. Suppose the data distribution satisfies Assumption \ref{assum:kernel_signs}. 
Then there exists $\widetilde\eta>0$ such that, if $\eta\le\widetilde\eta$ and $\alpha = \Omega(\eta^{-1})$, the following holds:
for any $\delta>0$ there exists $D>0$ so that, whenever $\widetilde d\ge D$, with probability at least $1-\delta$ over random initialization,
there exist $0<T_1<T_2$ for some $T_1=1+(m\alpha \eta \gamma)^{-1}$ such that:
\begin{enumerate}
\item (\emph{Early phase}) For $0<t\le T_1$, $w_t^i\in[1/2,1]$ for all clean samples and $w_t^i\in[0,1/2]$ for all noisy samples. Moreover, at $t=T_1$ the weights polarize to the extremes: $w_{T_1}^i=1$ for all clean samples and $w_{T_1}^i=0$ for all noisy samples.
\item (\emph{Filtering phase}) For $T_1<t\le T_2$, $w_t^i$ remains zero on noisy samples; moreover,
\[
\|f_{\theta_{T_2}}(x^v)-y^v\|_\infty = O(\eta+ \widetilde d^{-1/4}).
\]
\item (\emph{Post-filtering phase}) For all $t\ge T_2$, the weights are no longer guaranteed to filter out the noisy samples.
\end{enumerate}
\end{theorem}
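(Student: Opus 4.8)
The plan is to linearize the coupled $(\theta_t,w_t)$ recursion through the neural tangent kernel and then track signs. First I reduce the reweighting update: differentiating $\mathcal L_{\mathrm{val}}$ through the proxy $\widehat\theta_{t+1}(w)=\theta_t-\eta\sum_k w^k\nabla_\theta l_k(\theta_t)$ and using $\nabla_\theta l_i=u_i(\theta)\nabla_\theta f_i(\theta)$ for the squared loss, one obtains, with $\alpha\eta=\beta$,
\[
w^i_{t+1}=\mathrm{clip}_{[0,1]}\!\Big(w^i_t+\tfrac{\beta}{m}\,u_i(\theta_t)\,\big\langle u^v(\theta_t),\,K^{(t)}(\mathbf X^v,\mathbf x_i)\big\rangle+O(\eta)\Big),
\]
where $K^{(t)}(\mathbf X^v,\mathbf x_i)\in\mathbb{R}^m$ collects the tangent-kernel similarities between the clean subset and $\mathbf x_i$ at $\theta_t$, and the $O(\eta)$ term accounts for replacing $\nabla_\theta l^v_j(\widehat\theta_{t+1})$ by $\nabla_\theta l^v_j(\theta_t)$ (legitimate since $\|\widehat\theta_{t+1}-\theta_t\|=O(\eta)$). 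In parallel, the classifier step gives the residual recursions $u(\theta_{t+1})=u(\theta_t)-\eta K^{(t)}D_{w_{t+1}}u(\theta_t)+O(\eta^2)$ and its analogue for $u^v$, with $D_w=\diag(w)$. By the quoted NTK lemma and standard finite-width estimates, for $\widetilde d\ge D(\delta)$ and $\theta$ in an $O(1)$ ball the empirical tangent-kernel entries agree with their deterministic limits up to $O(\widetilde d^{-1/4})$ and stay there along the (finite) trajectory; and since $A^L,b^L$ are zero-initialized, $u_i(\theta_0)=-y_i$ and $u^v_j(\theta_0)=-y^v_j$.

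The heart is a sign lemma. Under Assumption~\ref{assum:kernel_signs}, whenever the residuals keep their initial signs, i.e.\ $\sign u_i=-y_i$ and $\sign u^v_j=-y^v_j$, each term $u^v_j\,K^{(t)}(\mathbf x^v_j,\mathbf x_i)$ has sign $-y_i^{\ast}$ — independently of whether $\mathbf x^v_j$ shares the class of $\mathbf x_i$, since flipping that class flips both $u^v_j$ and the kernel sign — so $\langle u^v,K^{(t)}(\mathbf X^v,\mathbf x_i)\rangle$ has sign $-y_i^{\ast}$ and magnitude at least $\gamma\sum_j|u^v_j|$. Hence the coupling $u_i\langle u^v,K^{(t)}(\mathbf X^v,\mathbf x_i)\rangle$ is positive for clean $\mathbf x_i$ (where $-y_i=y_i^{\ast}$) and negative for noisy $\mathbf x_i$ (where $-y_i=-y_i^{\ast}$), each of magnitude $\gtrsim\gamma\sum_j|u^v_j|$. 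For the early phase I then induct on $t\le T_1$ with the hypothesis that residuals keep their initial signs and are $\Theta(1)$ in magnitude and the kernel keeps its sign pattern; the residual recursions give $\|u(\theta_t)-u(\theta_0)\|_\infty,\|u^v(\theta_t)-u^v(\theta_0)\|_\infty=O(\eta t)$ and kernel drift $O(\eta t)+O(\widetilde d^{-1/4})$, so for $\eta\le\widetilde\eta$ (depending on $n,m,\beta,\gamma,\lambda_{\min}$) and $\widetilde d\ge D$ the hypothesis persists through $t=T_1=1+(m\alpha\eta\gamma)^{-1}$, where $\sum_j|u^v_j|=\Theta(m)$. Starting from the symmetric initialization $w_0=\tfrac12\mathbf{1}$, clean weights then only increase and noisy weights only decrease, each by at least a positive amount of order $m\alpha\eta\gamma$ (the factor $m$ coming from the same-sign summation of the clean-subset terms), so with clipping the clean weights stay in $[\tfrac12,1]$ and the noisy ones in $[0,\tfrac12]$ and reach the extremes $1$ and $0$ by $T_1$ — Part 1.

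For the filtering phase, at $t=T_1$ the weights are $1$ on clean and $0$ on noisy samples, so the classifier step becomes plain gradient descent on the clean-training loss; the standard NTK convergence argument (the clean sub-Gram matrix has least eigenvalue $\ge\lambda_{\min}>0$ by interlacing) contracts the clean residuals geometrically down to the floor $O(\eta+\widetilde d^{-1/4})$ set by discretization and finite-width kernel deviation, and — using Assumption~\ref{assum:kernel_signs} (equivalently, that the clean subset is consistent with / interpolated by the clean training data) — the clean-subset residual $u^v(\theta_t)$ is driven down as well; define $T_2$ as the first time $\|u^v(\theta_{T_2})\|_\infty=O(\eta+\widetilde d^{-1/4})$, which is finite and exceeds $T_1$. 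For $T_1<t\le T_2$, $\|u^v(\theta_t)\|_\infty$ still dominates the $O(\eta)$ perturbation, so residual signs cannot flip and the sign lemma still gives $\langle u^v,K^{(t)}(\mathbf X^v,\mathbf x_i)\rangle$ sign $-y_i^{\ast}$; since moreover the clean-trained network does not memorize the noisy label of $\mathbf x_i$, $u_i(\theta_t)$ keeps sign $-y_i=y_i^{\ast}$ with magnitude bounded away from $0$, so the coupling term is $\le0$ with magnitude exceeding the error and the clipped update keeps $w^i_t=0$ on noisy samples — Part 2. Finally, for $t\ge T_2$ the residual-driven coupling $\tfrac{\beta}{m}u_i\langle u^v,K^{(t)}(\mathbf X^v,\mathbf x_i)\rangle$ is $O(\beta(\eta+\widetilde d^{-1/4}))$, no larger in magnitude than the uncontrolled $O(\eta)+O(\widetilde d^{-1/4})$ terms in the weight recursion; its sign is thus not determined, and one exhibits admissible perturbations that push a noisy weight off $0$, so filtration is no longer guaranteed — Part 3.

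The main obstacle is propagating the NTK/linearization control along trajectories whose lengths ($T_1$, and especially $T_2\asymp(\eta\lambda_{\min})^{-1}\log(1/(\eta+\widetilde d^{-1/4}))$) can be large, so that the $O(\eta t)$ parameter/residual drift and the $O(\widetilde d^{-1/4})$ width error never swamp the $\Theta(\gamma)$ sign gap; this forces $\widetilde\eta$ and $D$ to be chosen jointly as functions of $(n,m,\beta,\gamma,\lambda_{\min},\delta)$, together with careful bookkeeping of which weights are clipped at which step. The second genuine subtlety is the filtering-phase claim that training on the clean portion of the data actually drives the clean-subset residual down to the $O(\eta+\widetilde d^{-1/4})$ floor rather than plateauing at a constant — a small generalization statement that must be extracted from Assumption~\ref{assum:kernel_signs} (or from assuming the clean subset lies in the span of the clean training data), and which is precisely what makes $T_2$ finite and the post-filtering phase non-vacuous.
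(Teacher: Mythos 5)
Your plan follows essentially the same route as the paper: linearize the joint $(\theta_t,w_t)$ recursion through the NTK, extract the signed coupling $u_i\langle u^v,K(\cdot,x_i)\rangle$ as the driving signal, and verify via Assumption~\ref{assum:kernel_signs} that this coupling has sign $+1$ on clean and $-1$ on noisy samples. The sign lemma as you state it (that flipping the class of $x^v_j$ flips both $u^v_j$ and the kernel sign, so the per-term sign is class-agnostic) is exactly the mechanism the paper exploits when it approximates $U(\theta_t)\,K\,u^v(\theta_t)$ by $U_0\,K^{(0)}\,u^v_0$ with $U_0=\mathrm{diag}(-\bm y)$, $u^v_0=-\bm y^v$. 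The filtering-phase contraction via the restricted Gram matrix (you invoke interlacing; the paper uses the $\|\cdot\|_{2,W^*}$ norm) and the post-filtering argument that the $O(\eta)+O(\widetilde d^{-1/4})$ error terms overwhelm the residual-driven coupling are also the paper's argument.

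Two details deserve flagging. First, a small normalization inconsistency: you write the update increment as $\tfrac{\beta}{m}u_i\langle u^v,K\rangle$ but later say the per-step change has order $m\alpha\eta\gamma$; with a $\tfrac{1}{m}$ normalization and a coupling of magnitude $\Theta(m\gamma)$, the increment is $\Theta(\alpha\eta\gamma)$ and $T_1$ would lack the $m$ factor. The paper's $T_1 = 1+(m\alpha\eta\gamma)^{-1}$ corresponds to the un-normalized meta-objective in its Equation~(1), not to $\mathcal L_{\mathrm{val}}=\tfrac{1}{m}\sum_j l^v_j$. Second, the claim that $\|u^v(\theta_{T_2})\|_\infty = O(\eta+\widetilde d^{-1/4})$ is not a consequence of Assumption~\ref{assum:kernel_signs} alone. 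Your sign argument guarantees each $u^v_i$ is pushed toward zero (monotonically, in sign, while both kernel-sign and residual-sign invariants hold), but to ensure the \emph{uniform} bound over all validation indices before any sign or magnitude invariant is lost, the paper introduces a further stability hypothesis (Assumption~\ref{assum:stable_convergence}): that each $u^v_i$ keeps the sign of its slowest decaying eigen-component once it reaches the $O(\eta+\widetilde d^{-1/4})$ floor. You correctly identify this as the subtle point but leave it unresolved; the theorem as stated thus implicitly relies on that additional assumption (a discrepancy that also exists between the paper's theorem statement and its appendix proof).
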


The proof is based on the first order approximation of update Equation~(\ref{alg:meta}). We can directly compute $\nabla_w\sum_{j=1}^m l^v_j(\theta_t-\eta \nabla \sum_{i=1}^n  w_t^i l_{i}(\theta_t)) $ via the chain rule:
\begin{align*}
    \nabla_w\sum_{j=1}^m l^v_j(\theta_t-\eta \nabla \sum_{i=1}^n  w_t^i l_{i}(\theta_t)) 
    &= -\eta U(\theta_t) J(\theta_t)^T J^v(\theta_t-\eta \nabla_{\theta}  w_t^T l(\theta_t)) u^v(\theta_t-\eta \nabla_{\theta}  w_t^T l(\theta_t)) \\
    &= -\eta U(\theta_t) J(\theta_t)^T J^{v}(\theta_t) u^v(\theta_t) + e_1\\
    &= -\eta U(\theta_t) K(X, X_{\textrm{clean}}) u^v(\theta_t) + e_1+e_2,
\end{align*}
where \(e_{1}=O(\eta^{2})\) is the truncation error resulting from first-order Taylor expansions of 
\(J^{v}\!\bigl(\theta_{t}-\eta\nabla_{\theta}(w_{t}^{\top}\ell(\theta_{t}))\bigr)\) and 
\(u^{v}\!\bigl(\theta_{t}-\eta\nabla_{\theta}(w_{t}^{\top}\ell(\theta_{t}))\bigr)\) around \(\theta_{t}\), 
and \(e_{2}\) denotes the discrepancy introduced by approximating \(J(\theta_{t})^{\top}J^{v}(\theta_{t})\) with the NTK kernel \(K(X,X_{\mathrm{clean}})\). 

At the beginning of training, the residuals $U(\theta_t)$ (and $u^v(\theta_t)$) are dominated by $\bm y$ (and $\bm y^v)$ because of the zero-prediction initialization. Thus, the update direction of $\bm w$ is controlled by $\bm y K(X, X_{\textrm{clean}})\bm y^v$. Under Assumption \ref{assum:kernel_signs}, weights of noisy (clean) samples will decrease (increase) according to the weight update formula. This dynamic remains true until the residuals are no longer controlled by $\bm y$ (and $\bm y^v$) or the weight update signal $ U(\theta_t) K(X, X_{\textrm{clean}}) u^v(\theta_t) $ is smaller than the error term $e_1+e_2$. Thus at the post-filtering phase, the weights start to be perturbed. A binary-MNIST experiment verifies this mechanism: as shown in Fig.~\ref{fig:weight_dynamics}, the weights traverse the three phases described above; in the late stage, noisy-sample weights drift slightly from zero and evolve only slowly, which can lead to overfitting to label noise.
\subsection{Data Distributions that are Well-Separated in Kernel Space}
\label{sec:kernel value distribution}
It is demonstrated in the proof of Theorem~\ref{thm: main} that Assumption~\ref{assum:kernel_signs} is sufficient for our theoretical guarantees but not necessary. It requires the kernel to induce well-separated, label-consistent clusters: same-class pairs have positive tangent similarity, cross-class pairs have negative tangent similarity. We next investigate how our results extend under weaker conditions, relaxing Assumption~\ref{assum:kernel_signs}. 

%\subsubsection{Shifted Kernel Values}
\noindent \textbf{Shifted Kernel Values  } 
We can replace Assumption \ref{assum:kernel_signs} with the following:

\begin{assumption}
    \label{assum:shift_sign}
    There exists a constant $\mu$ such that if $\bm x, \bm x^{\prime}$ belongs to the same class, $K^{(0)}(\bm x, \bm x^{\prime}) - \mu > \gamma >0$. Otherwise $K^{(0)}(\bm x, \bm x^{\prime}) - \mu < -\gamma< 0$. Moreover, we assume $\frac{1}{m}|\sum_i u^v_i(\theta_t)| \lesssim \widetilde d^{-1/4} $.
\end{assumption}

Under this relaxed assumption, the proof of the main result follows from the same idea. We empirically verify Assumption \ref{assum:shift_sign} by plotting the histogram of NTK values (Fig.~\ref{fig:NTK_values}) and the mean of residuals over the clean meta subset (Fig.~\ref{fig:sum_res}). 
% = \frac{1}{2n_1}\sum_{i\in D^*_1}\nabla_{\theta_t}f_{\theta_t}(x_i) + \frac{1}{2n_2}\sum_{i\in D^*_{-1}} \nabla_{\theta_t}f_{\theta_t}(x_i)$
% Suppose $|D^*_1|=n_1$ and $|D^*_{-1}|=n_2$, 

%\subsubsection{Mean Centered Kernel Values}
\noindent \textbf{Mean-Centered Kernel Values  }
Let $K_{\bm\mu}$ denote the $\bm\mu$-centered kernel for some vector $\bm\mu$, then
\begin{align}
u_i(\theta_t)\, K_{\bm\mu}(x_i, X_{\mathrm{clean}})\, u^v(\theta_t)
&= u_i(\theta_t)\,\big( \nabla_{\theta_t}f_{\theta_t}(x_i) - {\bm\mu} \big)^\top
      \big( \nabla_{\theta_t}f_{\theta_t}(X_{\mathrm{clean}}) - {\bm\mu} \mathbf{1}_m^\top \big)\, u^v(\theta_t) \notag\\
&= u_i(\theta_t)\Big[
      K(x_i, X_{\mathrm{clean}})\, u^v(\theta_t)
    - \nabla_{\theta_t}f_{\theta_t}(x_i)^\top {\bm\mu} \;\big(\mathbf{1}_m^\top u^v(\theta_t)\big) \notag\\
&\qquad
    - {\bm\mu}^\top \nabla_{\theta_t}f_{\theta_t}(X_{\mathrm{clean}})\, u^v(\theta_t)
    + {\bm\mu}^\top {\bm\mu} \;\big(\mathbf{1}_m^\top u^v(\theta_t)\big)
    \Big] \label{eq:mean_centered_update}
\end{align}
Since the clean subset is balanced, we further obtain from \eqref{eq:mean_centered_update} that 
\begin{align*}
    u_i(\theta_t)\, K_\mu(x_i, X_{\mathrm{clean}})\, u^v(\theta_t)
&= u_i(\theta_t)\Big(
      K(x_i, X_{\mathrm{clean}})\, u^v(\theta_t) +c_0
    \Big)
\end{align*}
where $c_0 = - {\bm\mu}^\top \nabla_{\theta_t}f_{\theta_t}(X_{\mathrm{clean}})\, u^v(\theta_t),$ which is independent of the individual sample \(x_i\). Since $u_i = -y_i$ at the beginning of training, the weight update direction is thus $y_i K(x_i, X_{\mathrm{clean}}) \bm y^v - y_ic_0$. 

\begin{proposition}
\label{prop:mean_centered}
Let $X_{\mathrm{clean}}=\{x^{\mathrm{clean}}_j\}_{j=1}^m$ be i.i.d.\ samples from a distribution $\mathcal D$.
Let $K$ be an NTK kernel and assume it is bounded: $|K(x,y)|\le K_{\max}$ for all $x,y$.
Assume non-degeneracy: for any fixed $x_i$,
\[
\mathrm{Var}_{X'\sim\mathcal D}\!\big[K(x_i,X')\big]\;\ge\;\sigma^2\;>\;0.
\]
Let the clean labels be balanced and set $\bm u^v=-\bm y^v\in\{\pm1\}^m$ so that $\mathbf 1_m^\top u^v=0$ and $\|\bm u^v\|_2=\sqrt m$.
Write $K_m:=K(X_{\mathrm{clean}},X_{\mathrm{clean}})\in\mathbb R^{m\times m}$ and suppose there exists (possibly random, i.e.\ data-dependent) $c_m\in\mathbb R$ such that
\begin{equation}\label{eq:RS-eps}
\|K_m\mathbf 1_m - c_m\mathbf 1_m\|_2 \;=\; O_p\!\big(m^{1-\varepsilon}\big)
\qquad\text{for some }\ \varepsilon>0.
\end{equation}
Define
\[
c_0 \;:=\; -\,\frac{1}{m}\,\mathbf 1_m^\top K_m\,\bm u^v,\qquad
S_i \;:=\; K(x_i,X_{\mathrm{clean}})\,\bm u^v .
\]
Then, as $m\to\infty$,
\[
|c_0| \;=\; O_p\!\big(m^{\tfrac12-\varepsilon}\big),\qquad
|S_i| \;=\; \Omega_p(\sqrt m),\qquad
\frac{|c_0|}{|S_i|}\;=\; O_p\!\big(m^{-\varepsilon/2}\big)\ \xrightarrow{p}\ 0.
\]
\label{prop:kean_centered}
\end{proposition}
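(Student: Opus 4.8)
The three estimates decouple, so I would establish them in the order $|c_0|$, then $|S_i|$, then the ratio. The bound on $|c_0|$ comes from the row-sum hypothesis \eqref{eq:RS-eps} together with the orthogonality $\mathbf 1_m^\top\bm u^v=0$ forced by balancedness; the lower bound on $|S_i|$ is an anti-concentration statement for a sum of bounded independent summands; and the ratio bound is then immediate on a high-probability intersection event, which also yields the claimed $\xrightarrow{p}0$.

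\emph{Step 1: bounding $|c_0|$.} I would use that $K_m$ is symmetric to write $\mathbf 1_m^\top K_m=(K_m\mathbf 1_m)^\top$, and decompose $K_m\mathbf 1_m=c_m\mathbf 1_m+\bm r$ with $\|\bm r\|_2=O_p(m^{1-\varepsilon})$. Then
\[
c_0=-\tfrac1m(K_m\mathbf 1_m)^\top\bm u^v
   =-\tfrac1m\big(c_m\,\mathbf 1_m^\top\bm u^v+\bm r^\top\bm u^v\big)
   =-\tfrac1m\,\bm r^\top\bm u^v ,
\]
since the $c_m$ term vanishes because $\mathbf 1_m^\top\bm u^v=0$. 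Cauchy--Schwarz with $\|\bm u^v\|_2=\sqrt m$ then gives $|c_0|\le m^{-1}\|\bm r\|_2\sqrt m=O_p\!\big(m^{-1/2}\cdot m^{1-\varepsilon}\big)=O_p\!\big(m^{1/2-\varepsilon}\big)$.

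\emph{Step 2: lower-bounding $|S_i|$.} Fixing $x_i$ and conditioning on the (balanced) label pattern, $S_i=\sum_{j=1}^m u^v_j K(x_i,x^{\mathrm{clean}}_j)$ is a sum of independent summands bounded by $K_{\max}$. Via the law of total variance, the non-degeneracy hypothesis $\mathrm{Var}_{X'\sim\mathcal D}[K(x_i,X')]\ge\sigma^2$ splits into two regimes: either the within-class variance is $\Omega(1)$, so $\mathrm{Var}(S_i)=\Theta(m)$, or the two class-conditional means of $K(x_i,\cdot)$ differ by a constant, so $|\mathbb E S_i|=\Theta(m)$ while $\mathrm{Var}(S_i)=O(m)$. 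In the first regime I would apply the Berry--Esseen theorem to the centered sum (the third absolute moments of the centered summands sum to $O(m)$, and the variance is $\Omega(m)$), obtaining $\mathbb P(|S_i|<c\sqrt m)\le C_1c+C_2m^{-1/2}$, which is below any prescribed $\eta$ once $c$ is small and $m$ large; in the second regime Chebyshev already gives $|S_i|\ge\Theta(m)-O_p(\sqrt m)=\Theta_p(m)$. Either way $\mathbb P(|S_i|\ge c\sqrt m)\to1$, i.e.\ $|S_i|=\Omega_p(\sqrt m)$.

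\emph{Step 3 and conclusion.} Given $\eta>0$, Steps 1--2 supply constants $M_1$ and $M_2>0$ such that, for all large $m$, the event $\{|c_0|\le M_1m^{1/2-\varepsilon}\}\cap\{|S_i|\ge M_2\sqrt m\}$ has probability at least $1-\eta$ by a union bound. On that event $|c_0|/|S_i|\le (M_1/M_2)\,m^{-\varepsilon}$, so $|c_0|/|S_i|=O_p(m^{-\varepsilon})$, which in particular is $O_p(m^{-\varepsilon/2})$ and hence $\xrightarrow{p}0$. The one genuinely delicate step is the lower bound on $|S_i|$: a second-moment estimate $\mathbb E[S_i^2]=\Omega(m)$ does not by itself rule out $|S_i|$ being small with non-vanishing probability, so real anti-concentration is needed — here Berry--Esseen on the centered sum, plus the separate (and easier) treatment of a conditional mean of order $m$. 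The remaining steps are a one-line Cauchy--Schwarz and a union bound.
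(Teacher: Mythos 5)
Your Step 1 (the bound on $|c_0|$) is identical to the paper's: decompose $K_m\mathbf 1_m = c_m\mathbf 1_m + \bm r$, kill the $c_m$ term via $\mathbf 1_m^\top\bm u^v=0$, and finish with Cauchy--Schwarz and $\|\bm u^v\|_2=\sqrt m$. Your Step 3 is also essentially the paper's conclusion, though by keeping $M_1/M_2$ as a constant rather than an $m$-dependent threshold you actually obtain the sharper rate $O_p(m^{-\varepsilon})$ rather than the paper's stated $O_p(m^{-\varepsilon/2})$.

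Your Step 2 (the lower bound on $|S_i|$) takes a genuinely different route. The paper centers $K(x_i,\cdot)$ against the \emph{unconditional} mean $\mu_K(x_i)=\mathbb E_{X'\sim\mathcal D}[K(x_i,X')]$ and observes that, because $\mathbf 1_m^\top\bm u^v=0$, the shift is annihilated exactly as in Step 1, so $S_i=\sum_j u^v_j Z_j$ with $Z_j$ i.i.d., conditionally mean-zero, bounded, and with variance $\ge\sigma^2$ by hypothesis. This gives $\mathbb E[S_i^2\mid x_i]=\sum_j\mathrm{Var}(Z_j\mid x_i)\ge m\sigma^2$ directly, and then Paley--Zygmund supplies a constant-probability anti-concentration bound and Berry--Esseen the $1-o(1)$ bound. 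You instead condition on the label pattern, invoke the law of total variance to split the hypothesis $\mathrm{Var}_{X'\sim\mathcal D}[K(x_i,X')]\ge\sigma^2$ into a within-class-variance regime (handled by Berry--Esseen on the centered sum) and a class-mean-gap regime (handled by Chebyshev on a $\Theta(m)$ mean). Both arguments reach the same conclusion. The paper's is shorter because the same orthogonality that kills $c_m$ in Step 1 also kills the row mean in Step 2, removing any need to discuss class-conditional structure at all; your version brings in a two-regime case analysis that is unnecessary under the paper's model (where the inputs are sampled from $\mathcal D$ independently of the fixed balanced label vector, so the between-class regime is vacuous), but it is more robust if one instead models the inputs as sampled class-conditionally given the labels. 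One small imprecision: when you apply Berry--Esseen you write $\mathbb P(|S_i|<c\sqrt m)\le C_1c+C_2m^{-1/2}$, but Berry--Esseen controls $|S_i-\mathbb E S_i|$, not $|S_i|$; you should either note that $\mathbb E S_i=0$ in the first regime (true under the paper's model, and under yours when only the within-class variance is nonzero), or observe that a nonzero $\mathbb E S_i$ only makes the bound easier.
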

% \begin{wrapfigure}{r}{0.54\textwidth}
%   \centering
%   \vspace{-1.2\baselineskip} 
%   \begin{subfigure}[t]{0.49\linewidth}
%     \includegraphics[width=\linewidth]{plots/NTK_values.png}
%     \caption{a}
%     \label{fig:NTK_values}
%   \end{subfigure}\hfill
%   \begin{subfigure}[t]{0.49\linewidth}
%     \includegraphics[width=\linewidth]{plots/centered_NTK_values.png}
%     \caption{b}
%     \label{fig:centered_NTK_values}
%   \end{subfigure}
%   \caption{(a) NTK\_values \quad (b) centered\_NTK\_values}
%   \label{fig:wrap-ab}
%   \vspace{-0.8\baselineskip} 
% \end{wrapfigure}
Proposition \ref{prop:kean_centered} implies that, for the first step analysis, Assumption \ref{assum:kernel_signs} can be replaced with the same assumption on the corresponding mean-centered kernel and the row-sum regularity condition \ref{eq:RS-eps}. 

To validate the kernel value distribution assumption for the mean-centered kernel, we conduct experiments on a binary MNIST task and plot the value histogram during training in Fig.~\ref{fig:centered_NTK_values}. We also plot the histogram of the NTK values in Fig.~\ref{fig:NTK_values}.

\section{Revisiting Meta Learning-Based Algorithms}

\begin{algorithm}[t]
\caption{Feature-Based Reweighting (FBR): a lightweight surrogate for meta reweighting}
\label{alg:FBR}
\begin{algorithmic}[1]
\Require Training set $\{(x_i,y_i)\}_{i=1}^n$, clean subset $\{(x_j^v,y_j^v)\}_{j=1}^m$, feature map $\phi:\mathcal X\!\to\!\mathbb R^d$, stepsize $\alpha>0$, label-signed coefficients $(\lambda_+,\lambda_-)\!\ge\!0$, iterations $T$.
%\Ensure Weights $w\in[0,1]^n$
\Ensure Weights $\bm w_T$, network $f_{\theta_T}$.
\State Initialize $w_i^{\,0}=\frac{1}{2}$ for all $i$.
\For{epoch $t=0,\cdots, T-1$}
\State $\mu \gets \tfrac{1}{m}\sum_{j=1}^m \phi(x_j^v)$ \Comment{clean-set mean}
\State $\widetilde\phi(x_j^v)\gets \phi(x_j^v)-\mu$ for $j=1,\dots,m$; \quad $\widetilde\Phi_v\gets [\,\widetilde\phi(x_j^v)\,]_{j=1}^m$
\For{mini-batch $B\subseteq\{1,\dots,n\}$}
  \State $\widetilde\phi(x_i)\gets \phi(x_i)-\mu$ for all $i\in B$; \quad $\widetilde\Phi_B\gets[\,\widetilde\phi(x_i)\,]_{i\in B}$
  \State $K_B \gets \widetilde\Phi_B\,\widetilde\Phi_v^{\!\top}$ \Comment{$|B|\!\times\! m$ mean-centered Gram}
  \For{each $i\in B$}
     \State $s_{i,c}\gets \frac{1}{|\{j:\,y_j^v=c\}|}\sum_{j:\,y_j^v=c} (K_B)_{i,j}$ for $c=1,\dots,C$
     \State $c_i^{(1)}\gets \arg\max_{c} s_{i,c}$;\quad $c_i^{(2)}\gets \arg\max_{c\neq c_i^{(1)}} s_{i,c}$ \Comment{top-1 / top-2 means}
     \State $\widetilde K_{i,j}\gets (K_B)_{i,j}-s_{i,c_i^{(2)}}$ \textbf{for} $j=1,\dots,m$ \Comment{row shifting}
     \State $\bar K_{i,j}\gets\left(\lambda_+\mathbf 1\{y_i=y_j^v\}-\lambda_-\mathbf 1\{y_i\neq y_j^v\}\right)\widetilde K_{i,j}$ \textbf{for} $j=1,\dots,m$ 
     \State $\widetilde d_i(\theta_t)\gets \sum_{j=1}^m \bar K_{i,j}$ \Comment{row-sum direction}
     \State $w_i^{\,t+1}\gets \mathrm{clip}_{[0,1]}\!\big(w_i^{\,t}-\alpha\,\widetilde d_i(\theta_t)\big)$
  \EndFor
    \State $\mathcal{L}_B \gets \frac{1}{|B|}\sum_{i\in B} w_i^{\,t}\cdot \ell\!\big(f_{\theta_t}(x_i),\,y_i\big)$
    \State $\theta_{t+1} \gets \theta_t - \eta \nabla_\theta \mathcal{L}_B$
\EndFor
\EndFor
\end{algorithmic}
\end{algorithm}

Our theoretical analysis indicates that the effectiveness of meta reweighting is governed by the \emph{weight–update direction}
\[
d_i(\theta_t)\;=\;u_i(\theta_t)\,\big[K(x_i,X_{\mathrm{clean}})\,u^v(\theta_t)\big],\qquad i=1,\dots,n.
\]
This expression aggregates, for each training example $x_i$, the similarity between $x_i$ and the clean subset $X_{\mathrm{clean}}$ weighted by the meta signals $u_i(\theta_t)$ and $u^v(\theta_t)$.
As established in Theorem~\ref{thm: main}, if $u^v(\theta_t)$ decays toward zero, the useful update signal can be dominated by approximation error; moreover, even under relaxed distributional assumptions (cf.\ Assumption~\ref{assum:shift_sign} and Proposition~\ref{prop:mean_centered}), the direction is susceptible to perturbations when the corresponding constants are large or $|X_{\mathrm{clean}}|$ is small.
In addition, the bilevel structure entails nontrivial computational overhead.
Motivated by these observations, we propose a streamlined procedure for the multiclass setting that preserves the analytical structure of $u(\theta_t)K(\cdot,X_{\mathrm{clean}})u^v(\theta_t)$ while avoiding heavy nested differentiation. 

\begin{figure}[t]
  \centering
  % ---------- Row 1 ----------
  \begin{subfigure}[t]{0.32\textwidth}
    \includegraphics[width=\linewidth]{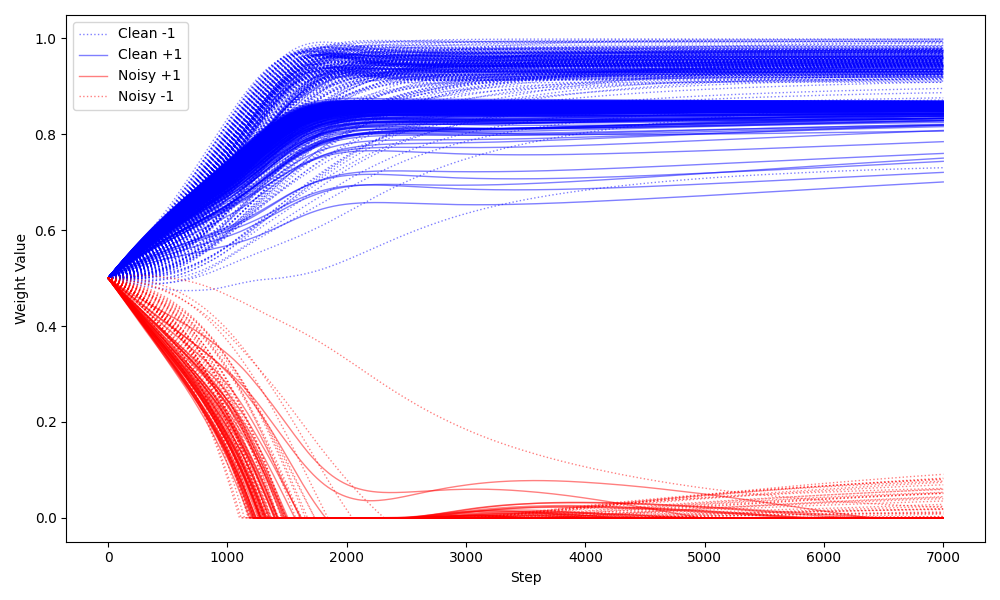}
    \caption{Weight Dynamics}
    \label{fig:weight_dynamics}
  \end{subfigure}\hfill
  \begin{subfigure}[t]{0.32\textwidth}
    \includegraphics[width=\linewidth]{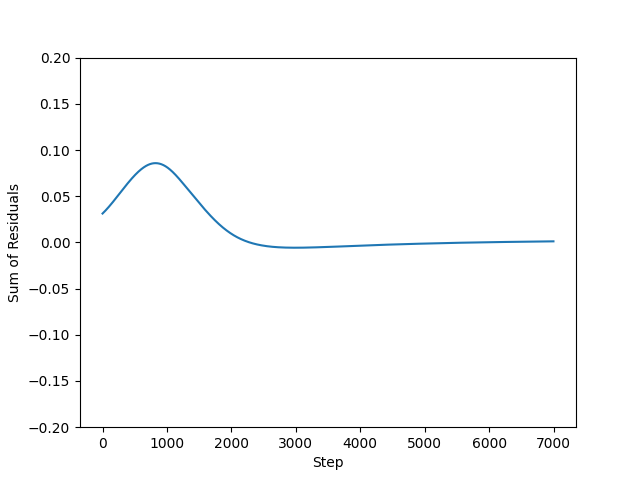}
    \caption{Mean Residual Dynamic}
    \label{fig:sum_res}
  \end{subfigure}\hfill
  \begin{subfigure}[t]{0.32\textwidth}
    \includegraphics[width=\linewidth]{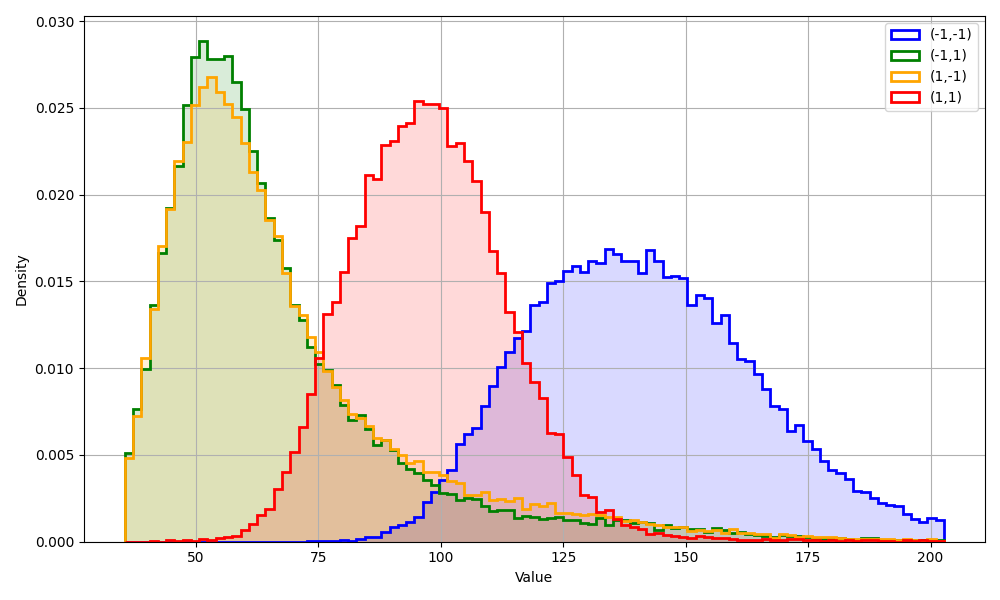}
    \caption{NTK Values}
    \label{fig:NTK_values}
  \end{subfigure}

  \medskip

  % ---------- Row 2 ----------
  \begin{subfigure}[t]{0.32\textwidth}
    \includegraphics[width=\linewidth]{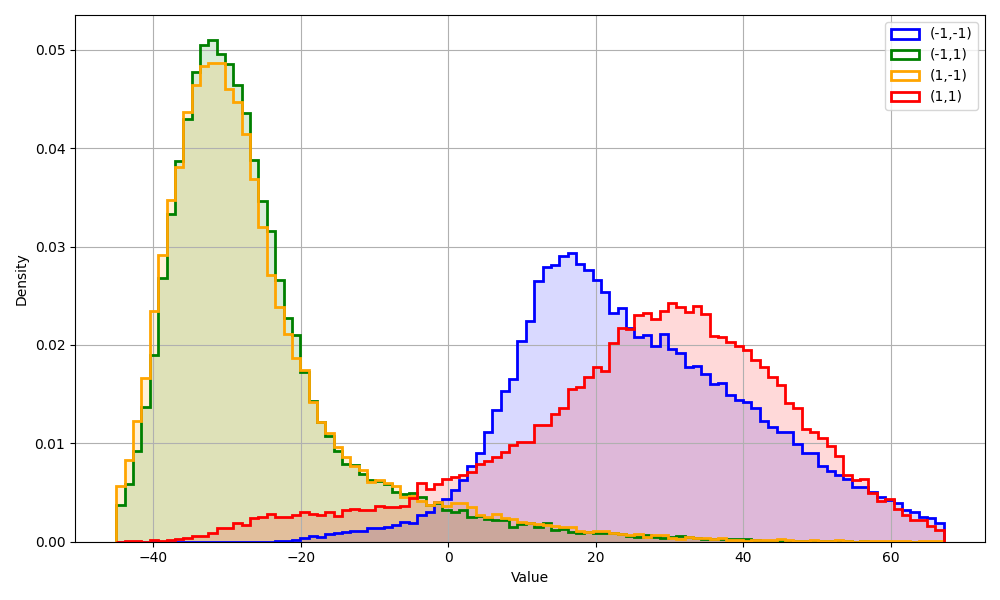}
    \caption{Mean-Centered NTK Values}
    \label{fig:centered_NTK_values}
  \end{subfigure}\hfill
  \begin{subfigure}[t]{0.32\textwidth}
    \includegraphics[width=\linewidth]{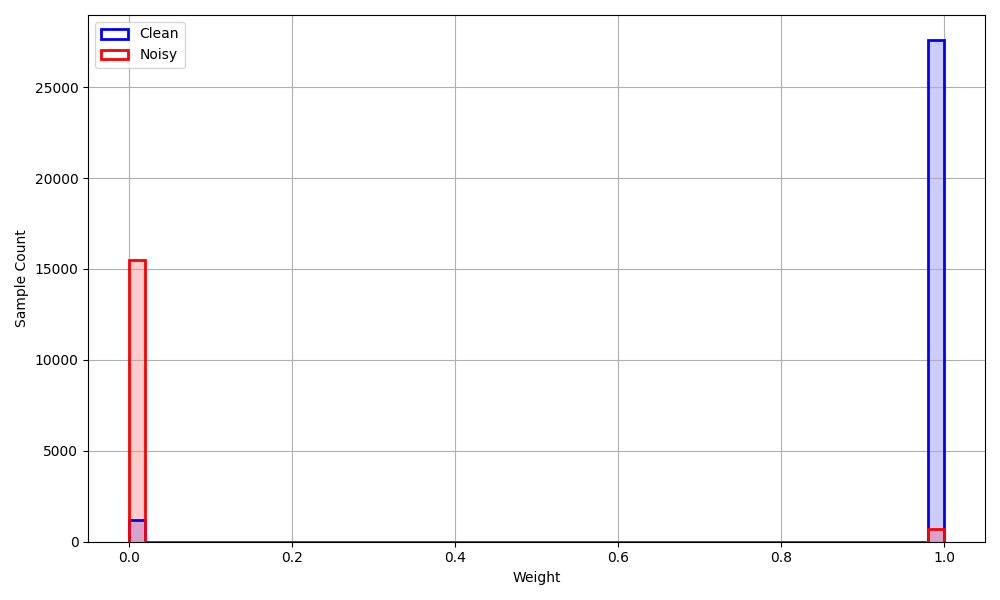}
    \caption{Weight Distribution}
    \label{fig:weight_final}
  \end{subfigure}\hfill
  \begin{subfigure}[t]{0.32\textwidth}
    \includegraphics[width=\linewidth]{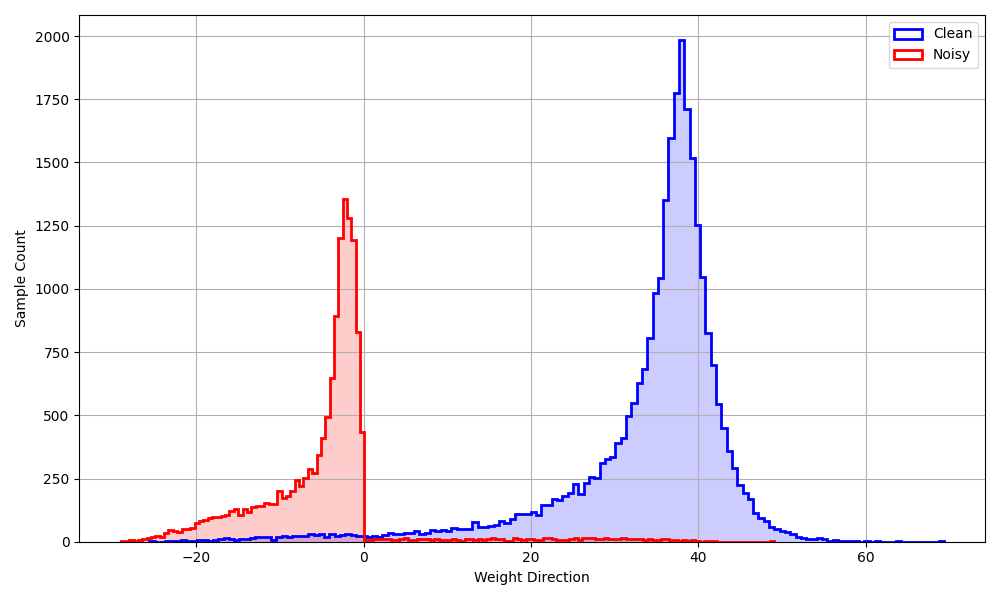}
    \caption{Weight Directions}
    \label{fig:weight_direction_final}
  \end{subfigure}

  \caption{(a) Trajectories of per-sample weights over epochs. (b) Mean residual dynamics on the clean subset. (c) NTK values between training and clean samples. (d) Mean-centered NTK values between training and clean samples. (e) Final weight distribution at convergence, separated by clean vs. noisy samples. (f) Final weight directions.}
  \label{fig:plot}
\end{figure}

\noindent\textbf{(1) Mean-centered features.}
Let $\phi:\mathcal X\to\mathbb R^d$ be a  representation mapping, e.g., the penultimate-layer feature or the neural tangent feature.
Define the clean-set mean $\mu:=\frac{1}{m}\sum_{j=1}^m \phi(x_j^v)$ and the centered features
\(
\widetilde\phi(x):=\phi(x)-\mu.
\)
We then form a Gram matrix between the training and clean sets by
\[
K(x_i,x_j^v)\;:=\;\langle \widetilde\phi(x_i),\,\widetilde\phi(x_j^v)\rangle,
\quad\text{equivalently}\quad
K=\widetilde\Phi\,\widetilde\Phi_v^{\!\top},
\]
where $\widetilde\Phi\in\mathbb R^{n\times d}$ and $\widetilde\Phi_v\in\mathbb R^{m\times d}$ stack the centered features. 
Mean-centering removes the global bias discussed in Section \ref{sec:kernel value distribution}. As indicated in our theoretical analysis, the Gram matrix entries are expected to be well-separated by class: within-class similarities exceed cross-class ones. 

\noindent\textbf{(2) Multiclass row shifting.} Let $C$ be the number of classes. For each point $i$ and class $c$, compute class-wise similarity aggregates
\[
s_{i,c}\;:=\;\frac{1}{| \{ j:\,y_j^v=c \} |}\sum_{j:\,y_j^v=c} K(x_i,x_j^v),\qquad c\in\{1,\dots,C\},
\]
and let $c_i^\star$ be the class with the second-largest mean similarity $s_{i,c}$.
Define a row shift that preserves only the dominant class as positive:
\[
\widetilde K_{i,j}\;:=\;K_{i,j}\;-\;s_{i,c_i^\star}.
\]
This step produces a class-discriminative margin at the row level without modifying the dominant class ordering.

\noindent\textbf{(3) Label-aware scaling.}
To further align the update with the putative label of $x_i$, apply a label-aware multiplicative mask with $\lambda_+,\lambda_-\ge 0$:
\[
\bar K_{i,j}\;:=\;\big(\lambda_+\mathbbm 1\{y_i=y_j^v\}-\lambda_-\mathbbm 1\{y_i\neq y_j^v\}\big)\,\widetilde K_{i,j},
\]
where $\mathbbm 1\{ \cdot\}$ is the indicator function. This operation can be viewed as a simple surrogate for $u_i(\theta_t)u^v(\theta_t)$-weighted alignment when reliable validation signals are weak or noisy.

\noindent\textbf{(4) Row-sum weight update and constraints.}
Define the simplified direction
\[
\widetilde d_i(\theta_t)\;:=\;\,\sum_{j=1}^m \bar K_{i,j},
\]
and update
\[
w^{\,i}_{t+1}\;=\;\Pi_{[0,1]}\!\big(w^{\,i}_t-\alpha\,\widetilde d_i(\theta_t)\big),
\qquad i=1,\dots,n,
\]
where $\Pi_{[0,1]}$ denotes elementwise clipping.
This preserves the signed aggregation structure of the ideal meta direction while replacing the NTK with a centered computationally friendly surrogate.

\noindent\textbf{Remarks on computation and stability.}
(i) The dominant cost is the matrix product $K=\widetilde\Phi\,\widetilde\Phi_v^{\!\top}$, which can be computed in $O(Bmd)$ time and $O(\min\{Bm,\,Bd+md\})$ memory.
(ii) Mean-centering makes $K$ invariant to global feature shifts and mitigates the constant-component bias; this mirrors the role of kernel centering in our analysis.
(iii) The row-shift construction ensures a nonpositive sum for all non-dominant classes.
(iv) Hyperparameters $(\lambda_+,\lambda_-)$ control the within-/cross-class tradeoff; a practical default is $\lambda_+=1$ and $\lambda_-=1/{(C-1)}$ with annealing.

\noindent
In summary, the proposed construction yields a computationally friendly surrogate for the meta reweighting update $u(\theta_t)K(\cdot,X_{\mathrm{clean}})u^v(\theta_t)$. This surrogate retains the essential signed, similarity-weighted aggregation motivated theoretically, while replacing the NTK with a centered Gram matrix and enforcing a multiclass similarity margin via row shifting and label-aware scaling. Computational details are provided in Algorithm \ref{alg:FBR}.

\textbf{Distribution of weight derivatives.}
On CIFAR-10 with $40\%$ symmetric noise, we run Algorithm~\ref{alg:FBR} and record throughout training the \emph{negative} weight derivatives
$-\partial \mathcal{L}_{\mathrm{val}}/\partial w_i$ (the instantaneous meta–update directions for $w_i$).
The empirical distribution is shown in Fig.~\ref{fig:weight_direction_final}.
Consistent with the theory, the two populations are clearly separated: most clean examples yield positive values (driving $w_i$ upward), whereas most noisy examples yield negative values (driving $w_i$ downward).
This discriminative signal underlies the subsequent filtering effect; the terminal weight histogram in Fig.~\ref{fig:weight_final} further affirms this separation.

\section{Experiments}
\label{sec:exp}
In this section, we empirically evaluate the effectiveness of our algorithm on both synthetic (i.e. CIFAR-10 and CIFAR-100) and realistic (i.e., CIFAR-N and Clothing-1M) label noise.

\noindent \textbf{Experimental setup.}
We adhere to the evaluation setup of \textsc{FINE}~\citep{kim2021fine}.

\emph{Benchmarks and noise models.}
Following standard practice, we consider two synthetic label-noise regimes on CIFAR-10/100: (i) \emph{symmetric} noise, where each label is independently flipped to a uniformly random incorrect class at a prescribed rate; and (ii) \emph{asymmetric} noise, with class-dependent flips. For CIFAR-10 we use the canonical mapping \textsc{TRUCK}$\rightarrow$\textsc{AUTOMOBILE}, \textsc{BIRD}$\rightarrow$\textsc{AIRPLANE}, \textsc{DEER}$\rightarrow$\textsc{HORSE}, \textsc{CAT}$\leftrightarrow$\textsc{DOG}. For CIFAR-100, the 100 classes are grouped into 20 super-classes (size 5) and labels are shifted cyclically within each super-class. We also evaluate on the real-world Clothing1M dataset~\citep{xiao2015learning} with naturally corrupted labels (1M images, 14 categories), which provides 50k/14k/10k verified-clean splits for training/validation/testing. As in~\citep{kim2021fine}, instead of using the 50k clean training split, we construct a pseudo-balanced training set of 120k images from the noisy pool and report accuracy on the 10k clean test set. 
For every dataset, we reserve a clean meta subset of 2{,}000 examples for weight updates. On CIFAR-10/100 this subset is drawn from the original clean labels; on Clothing1M it is drawn from the provided 14k clean validation set. 
We compare the proposed method with the following sample selection/reweighting baselines: (1) Bootstrap \citep{reed2014training}, (2) Forward \citep{patrini2017making}, (3) Co-teaching \citep{han2018co}, (4) Co-teaching++ \citep{yu2019does}, (5) TopoFilter \citep{wu2020topological}, (6) CRUST \citep{mirzasoleiman2020coresets}, (7)BHN \citep{yu2023delving}, (8) RENT \citep{bae2024dirichlet}. In the appendix, we expand our analysis to include the CIFAR-10/100-N dataset~\citep{wei2022learning}.

\emph{Results overview.}
Table~\ref{tab:cifar_accuracy} reports test accuracies across noise rates on CIFAR-10/100. On CIFAR-10 with 20\% symmetric noise our method attains 92.3\%: +1.3\% gain over the strongest prior (FINE at 91.0\%). At 50\% symmetric noise we obtain 87.0\%, which matches the performance of CRUST (87.0\%) and is within the margin error compared to FINE (87.3\%). At 40\% asymmetric noise we reach 90.6\% accuracy, improving on FINE (89.5\%) by +1.1\%. Gains are larger on CIFAR-100: at 20\%/50\% symmetric noise we achieve 73.4\% and 65.4\% respectively, exceeding FINE (70.3\%, 64.2\%) 3.0\% and 1.2\%, respectively. With 40\% asymmetric noise we observe 73.2\% accuracy, a more than 11\% improvement over the best prior report (FINE at 61.7\%). These trends indicate that our selection mechanism is especially effective in the class-dependent corruption regime and for the fine-grained CIFAR-100 label space, where overfitting to noise is most severe. On Clothing1M, our approach achieves 74.16\% top-1 accuracy, surpassing the baselines.
Empirically, the procedure retains informative clean instances while suppressing misleading ones, mitigating overfitting to noisy labels and yielding superior overall performance. 
In Table~\ref{tab:cifar_n} in Appendix~\ref{sec:appdxCIFARN}, we report test accuracies for the CIFAR-N datasets. On CIFAR-10N Aggre our method attains 92.3\% accuracy: a 0.9\% improvement over the strongest prior (JoCoR at 91.4\%). On CIFAR-10N Worst, our method attains 85.6\% accuracy: a 1.8\% improvement over the best performing prior (Co-Teaching at 83.3\%). Similar to the synthetic benchmarks, the improvement is more pronounced on CIFAR-100N (a +3.5\% improvement over CORES).

We additionally compare with Meta-Weight-Net (MW-Net) \citep{shu2019meta} on CIFAR-10/100. Table~\ref{tab:meta} in Appendix~\ref{sec:exp_detail} demonstrates the advantages of neural-tangent features and penultimate-layer features. In contrast, MW-Net shows marked overfitting in the presence of label noise.
\begin{table}[t]
\centering
\small
\setlength{\tabcolsep}{6pt}
\caption{Results on CIFAR-10/100 with symmetric and asymmetric label noise.}
%\begin{tabular}{l!{\;\vrule\;}ccc!{\;\vrule\;}ccc}
\begin{tabular}{lcccccc}
\toprule
\textbf{Dataset}
  & \multicolumn{3}{c}{\textbf{CIFAR-10}}
  & \multicolumn{3}{c}{\textbf{CIFAR-100}} \\
%\midrule
\textbf{Noisy Type}
  & \multicolumn{2}{c}{Sym} & \multicolumn{1}{c}{Asym}
  & \multicolumn{2}{c}{Sym} & Asym \\
%\midrule
\textbf{Noise Ratio} & 20 & 50 & 40 & 20 & 50 & 40 \\
\midrule
%\midrule
Standard            & \(87.0 \pm 0.1\) & \(78.2 \pm 0.8\) & \(85.0 \pm 0.0\) & \(58.7 \pm 0.3\) & \(42.5 \pm 0.3\) & \(42.7 \pm 0.6\) \\
Bootstrap       & \(86.2 \pm 0.2\) & --               & \(81.2 \pm 1.5\) & \(58.3 \pm 0.2\) & --               & \(45.1 \pm 0.6\) \\
Forward         & \(88.0 \pm 0.4\) & --               & \(83.6 \pm 0.6\) & \(39.2 \pm 2.6\) & --               & \(34.4 \pm 1.9\) \\
Co-teaching    & \(89.3 \pm 0.3\) & \(83.3 \pm 0.6\) & \(88.4 \pm 2.8\) & \(63.4 \pm 0.4\) & \(49.1 \pm 0.4\) & \(47.7 \pm 1.2\) \\
Co-teaching+    & \(89.1 \pm 0.5\) & \(84.9 \pm 0.4\) & \(86.5 \pm 1.2\) & \(59.2 \pm 0.4\) & \(47.1 \pm 0.3\) & \(44.7 \pm 0.6\) \\
TopoFilter      & \(90.4 \pm 0.2\) & \(86.8 \pm 0.3\) & \(87.5 \pm 0.4\) & \(66.9 \pm 0.4\) & \(53.4 \pm 1.8\) & \(56.6 \pm 0.5\) \\
CRUST           & \(89.4 \pm 0.2\) & \(87.0 \pm 0.1\) & \(82.4 \pm 0.0\) & \(69.3 \pm 0.2\) & \(62.3 \pm 0.2\) & \(56.1 \pm 0.5\) \\
FINE                & \(91.0 \pm 0.1\) & \(\bm{87.3 \pm 0.2}\) & \(89.5 \pm 0.1\) & \(70.3 \pm 0.2\) & \(64.2 \pm 0.5\) & \(61.7 \pm 1.0\) \\
BHN                 & \(85.7 \pm 0.2\)              & \(85.4 \pm 0.1\)              & \(86.4 \pm 0.9\)              & \(64 .1 \pm 0.3\)              & \(48 .2 \pm 0.7\)              & \(53.4 \pm 0.9\)              \\
RENT                & \(79.8 \pm 0.2\)              &  \(66.8 \pm 0.6\)               & \(78.4 \pm 0.3\)              & \(48.4 \pm 0.6\)              & \(37.2 \pm 1.2\)              & \(37.8 \pm 1.4\)              \\
\midrule
\textbf{Ours (FBR)}       & \(\bm{92.3\pm0.2}\)     & \({87.0\pm0.1}\)     & \(\bm{90.6\pm 0.4}\)     &  \(73.4\pm0.2\)      & \(\bm{65.4\pm0.4}\)      & \(\bm{73.2\pm0.2}\)     \\
\textbf{Ours (NTK)}       & \(91.4 \pm 0.2\)     & \(86.4 \pm 0.2\)     & \(89.7 \pm 0.4\)     &  \(\bm{73.6 \pm 0.2}\)      & \(\bm{65.4\pm 0.5}\)      & \(73.1 \pm 0.4\)     \\
\bottomrule
\end{tabular}
\label{tab:cifar_accuracy}
\end{table}

\begin{table}[t]
  \centering
  %\small
  \caption{Test accuracy on Clothing1M dataset.}
  \begin{tabular}{lccccccccc}
    \toprule
    Method & Standard  & SCE & ELR & CORES$^{2}$ & FINE & BHN & RENT & Ours \\
    \midrule
    Accuracy & 68.94  & 71.02 & 72.87 & 73.24 & 72.91 & $73.27$ & $70.1$ &$\bm{74.16} $ \\ % \pm 0.4,  \pm 0.1
    \bottomrule
  \end{tabular}
  \label{tab:clothing_accuracy}
\end{table}

\section{Conclusions}
We analyze the training dynamics of meta-reweighting under noisy labels when the reweighting step size scales inversely with the classifier training step size, and develop a theory explaining why the procedure separates clean from noisy data. Under standard over-parameterization and stable step sizes, training follows a three-phase trajectory—early alignment, filtering with weight polarization and clean-subset convergence, and post-filtering susceptibility. We further provide empirical evidence that corroborates both the predicted dynamics and the underlying assumptions.

Building on these insights, we proposed a computationally light surrogate for the bilevel update that preserves the same mechanism: it keeps the signed, similarity-weighted aggregation and mean-centers the similarity Gram matrix to remove global bias. Across synthetic and real-world datasets, the surrogate consistently improves test accuracy over strong reweighting/selection baselines.

\newpage
% \section{Reproducibility statement}
% We have taken several steps to facilitate reproduction of our results. An anonymous code package is provided in the supplementary materials. Experimental settings—datasets, preprocessing, model architectures, hyperparameters and training schedules are summarized in Section~\ref{sec:exp} and Appendix~\ref{sec:exp_detail}. Theoretical results are stated with explicit assumptions and accompanied by complete proofs in Section~\ref{sec:theoretical} (see also Appendix~\ref{sec:proof}). 

% \subsubsection*{Acknowledgments}
% Use unnumbered third level headings for the acknowledgments. All
% acknowledgments, including those to funding agencies, go at the end of the paper.

\bibliography{main}
\bibliographystyle{iclr2026/iclr2026_conference}

\appendix
\section{Previous Studies}
\label{sec:related}
\subsection{Theoretical Works for Meta Learning}
\paragraph{Convergence analysis.} 
For gradient-based meta-learning (GBML) such as MAML, several works establish convergence to an $\varepsilon$–first-order stationary point under appropriate task/data batch sizes and step-size choices \citep{fallah2020convergence,ji2022theoretical}. Moreover, using first-order updates that drop Hessian terms may forfeit these guarantees in general, while Hessian-free variants can recover them \citep{fallah2020convergence}. Multi-step MAML admits provable convergence in both resampling (fresh data per inner step) and finite-sum (fixed per-task dataset) regimes, with theory recommending inner-loop step sizes that shrink with the number of inner steps to control nested-gradient bias and variance \citep{ji2022theoretical}. 

\paragraph{Generalization and benign overfitting.} 
Stability- and distributional-analyses yield task-level generalization bounds: for \emph{recurring} test tasks (drawn from those seen in training), the excess risk decreases with both the number of tasks and per-task samples; for \emph{unseen} tasks, the bound degrades with the discrepancy between the train-task and test-task distributions (e.g., via total-variation or related divergences) \citep{fallah2021generalization}. In over-parameterized meta-linear settings, recent results show that GBML can exhibit \emph{benign overfitting}: the excess risk decomposes into cross-task and within-task variance plus bias terms, identifying regimes where interpolation does not harm—and can even improve—generalization; the theory highlights the role of task heterogeneity as well as inner-loop step size and regularization in maintaining this benign behavior \citep{chen2022understanding}.

\subsection{Noisy Label Learning}
\paragraph{Noisy-label learning (NLL).}
Deep networks easily memorize label noise, hurting generalization even when training error goes to zero \citep{zhang2021understanding}. Prior work can be organized into three main directions—label correction, sample selection, and sample re-weighting—along with surveys that systematize these trends \citep{song2022learning}.

\emph{Sample re-weighting.}
Re-weighting adjusts per-sample importance to mitigate corrupted labels (and other defects). Classical loss re-weighting \citep{liu2015classification} is complemented by meta-learning approaches that learn example weights from a small trusted set \citep{ren2018learning} or learn a parametric weighting function (Meta-Weight-Net) \citep{shu2019meta}; these typically outperform heuristic weighting under label noise \citep{ren2018learning,shu2019meta}. Other strategies infer Bayesian latent weights \citep{wang2017robust}, learn curricula via a mentor network \citep{jiang2018mentornet}, or adopt self-paced updates based on loss \citep{kumar2010self}; analogous ideas appear in adversarial robustness \citep{holtz2022learning}.

\emph{Sample selection.}
Sample selection can be regarded as a special case of sample reweighting. These methods filter or down-weight likely-noisy instances during training. The small-loss principle assumes clean samples have lower losses early on \citep{han2018co}, but may bias toward easy examples. To reduce confirmation bias and over-pruning, variants select by peer disagreement \citep{yu2019does}, co-regularize two networks with agreement (e.g., KL) \citep{wei2020combating}, or adopt dynamic thresholds with confidence regularization \citep{cheng2020learning}.

\emph{Label correction.}
Correction is often combined with \emph{sample selection} to improve robustness and stability, with selection gating updates to likely-clean examples and correction amending mislabeled ones. Leveraging the early-learning effect—models correctly predict a subset of mislabeled samples early in training \citep{liu2020early}—labels are corrected via (i) joint optimization of network parameters and (soft) labels \citep{tanaka2018joint} or (ii) probabilistic label modeling \citep{yi2019probabilistic}. Dual-network schemes stabilize correction (Co-Teaching exchanges small-loss examples \citep{han2018co}; SELFIE replaces a subset using historical predictions \citep{song2019selfie}). Hybrid methods combine correction with \emph{sample selection}, e.g., DivideMix uses a two-component mixture to split clean/noisy data and applies semi-supervised training with Mixup \citep{li2020dividemix,zhang2017mixup}, which helps gate updates and curb confirmation bias. Additional criteria/schedules include likelihood-ratio relabeling \citep{zheng2020error} and stage-/confidence-aware regularization (ProSelfLC) \citep{wang2021proselflc}.

\section{Proof of Theorem \ref{thm: main}}
\label{sec:proof}
\subsection{Early Phase}
\label{sec:early}
To prove Theorem \ref{thm: main}, we first state the NTK result of wide neural networks. 

\begin{definition}
    Let the linearized neural network of $f_{\theta_t}(x)$ be defined as  $f_{\operatorname{lin}}^{(t)}(\boldsymbol{x}):=f^{(0)}(\boldsymbol{x})+\left\langle\theta^{(t)}-\theta^{(0)}, \nabla_\theta f^{(0)}(\boldsymbol{x})\right\rangle$.
\end{definition}

\begin{theorem}
\label{thm:lin_approx}
    Let $f_{\operatorname{lin}}^{(t)}$ be its linearized neural network trained by the same reweighting factors. Under the same settings as in Theorem \ref{thm: main}, for any fixed finite $T>0$, with probability at least $1-\delta$ over random initialization we have 
    $$
    \left|f_{\operatorname{lin}}^{(t)}(\boldsymbol{x})-f^{(t)}(\boldsymbol{x})\right| \leq
      C \eta t B^{t-1} \tilde{d}^{-1 / 4}.
    $$
% $$
% \sup _{0\leq t \leq T}\left|f_{\operatorname{lin}}^{(t)}(\boldsymbol{x})-f^{(t)}(\boldsymbol{x})\right| \leq C \tilde{d}^{-1 / 4}.
% $$
\end{theorem}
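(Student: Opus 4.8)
The proof follows the template of NTK linearization arguments (as in the reference \citep{zhai2022understanding}), adapted to the coupled weight--parameter dynamics of Eq.~(\ref{alg:meta})--(\ref{eq:outer-update}). The plan is a single induction on $t$ that simultaneously maintains three invariants for both $f^{(t)}$ and $f_{\operatorname{lin}}^{(t)}$: (I1) the parameters stay in a fixed ball $\|\theta_t-\theta^{(0)}\|_2\le R$ on which, with high probability, the empirical NTK is within $O(\widetilde d^{-1/4})$ of its limit $\bm K$, the Jacobian has operator norm $O(1)$, and the Jacobian is locally Lipschitz with constant $O(\widetilde d^{-1/2})$; (I2) the residuals grow at most geometrically, $\|u(\theta_t)\|_2\vee\|u^{\operatorname{lin}}(\theta_t)\|_2\le C' B^{t}$, where $B$ is the per-step residual-amplification constant (in general $B\ge 1$, since reweighting need not contract the residual); and (I3) $\|f^{(t)}-f_{\operatorname{lin}}^{(t)}\|_\infty\le C\eta t B^{t-1}\widetilde d^{-1/4}$. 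The base case $t=0$ is immediate: the zero initialization of the last layer gives $f^{(0)}\equiv f_{\operatorname{lin}}^{(0)}\equiv 0$, while (I1)--(I2) hold by standard Gaussian concentration of the tangent kernel at initialization (the Gram-matrix lemma above together with operator-norm bounds), on an event of probability $\ge 1-\delta$ as soon as $\widetilde d\ge D(\delta,T)$.

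For the inductive step I would write the squared-loss parameter updates explicitly,
\[
\theta_{t+1}-\theta_t = -\eta\, J(\theta_t)\,W_{t+1}\,u(\theta_t),
\qquad
\theta_{t+1}^{\operatorname{lin}}-\theta_t^{\operatorname{lin}} = -\eta\, J(\theta^{(0)})\,W_{t+1}^{\operatorname{lin}}\,u^{\operatorname{lin}}(\theta_t),
\]
with $W_{t+1}=\diag(w_{t+1})$, and decompose the one-step growth of $f^{(t)}(\bm x)-f_{\operatorname{lin}}^{(t)}(\bm x)$ into (a) the second-order Taylor remainder of $f$ along its own update, bounded by $\sup\|\nabla_\theta^2 f\|\cdot\|\theta_{t+1}-\theta_t\|_2^2 = O(\widetilde d^{-1/2}\eta^2 B^{2t})$ using the local Lipschitz bound on $J$; (b) the Jacobian-drift term $|\langle \nabla_\theta f^{(t)}(\bm x)-\nabla_\theta f^{(0)}(\bm x),\,\theta_{t+1}-\theta_t\rangle|\le O(\widetilde d^{-1/2})\,\|\theta_t-\theta^{(0)}\|_2\,\eta B^{t} = O(\widetilde d^{-1/2}\eta^2 t B^{2t})$; and (c) the direction mismatch $\eta\,\langle\nabla_\theta f^{(0)}(\bm x),\,J(\theta_t)W_{t+1}u(\theta_t)-J(\theta^{(0)})W_{t+1}^{\operatorname{lin}}u^{\operatorname{lin}}(\theta_t)\rangle$, which itself splits into a Jacobian-drift piece of the same order as (b) and a residual/weight-mismatch piece bounded by $\eta\,\|J(\theta^{(0)})\|_{\mathrm{op}}\big(\|u(\theta_t)-u^{\operatorname{lin}}(\theta_t)\|_2 + \|W_{t+1}-W_{t+1}^{\operatorname{lin}}\|_{\mathrm{op}}\|u(\theta_t)\|_2\big)$. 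Here $\|u(\theta_t)-u^{\operatorname{lin}}(\theta_t)\|_2\le\sqrt{n}\,\|f^{(t)}-f_{\operatorname{lin}}^{(t)}\|_\infty$ is already controlled by (I3), and $\|W_{t+1}-W_{t+1}^{\operatorname{lin}}\|_{\mathrm{op}}$ is controlled by Lipschitz continuity of the reweighting map in Eq.~(\ref{alg:meta}) with respect to $(u(\theta_t),J(\theta_t),u^v,J^v)$ — the clipping step is a $1$-Lipschitz projection onto $[0,1]^n$ and only helps — together with the truncation and approximation errors $e_1=O(\eta^2)$, $e_2=O(\widetilde d^{-1/4})$ recorded after Theorem~\ref{thm: main}. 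Summing (a)+(b)+(c) over $s\le t$ and applying discrete Grönwall re-establishes (I3) at $t+1$ after absorbing $\eta\le\widetilde\eta$ and $t\le T$; (I1) is recovered from $\|\theta_{t+1}-\theta^{(0)}\|_2\le\eta\sum_{s\le t}\|J(\theta_s)\|_{\mathrm{op}}\|u(\theta_s)\|_2 = O(\eta T B^{T})\le R$ for $\eta$ small; and (I2) from the identity $u(\theta_{t+1}) = u(\theta_t) + (f^{(t+1)}-f^{(t)})$, giving $\|u(\theta_{t+1})\|_2\le(1+O(\eta\lambda_{\max}))\|u(\theta_t)\|_2 + O(\widetilde d^{-1/4})$, i.e.\ the geometric bound with $B = 1+O(\eta\lambda_{\max})$.

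The main obstacle will be piece (c): unlike the vanilla NTK setting, where the dynamics reduce to a clean linear recursion in the residual, here $w_t$ is produced by an inner bilevel step that evaluates $J^v$ and $u^v$ at the pseudo-parameter $\widehat\theta_{t+1}(w)$, so I must show this map is Lipschitz \emph{uniformly over the finite horizon} in exactly the quantities that (I1)--(I3) control, and that the errors $e_1,e_2$ do not accumulate faster than the $\widetilde d^{-1/4}$ budget. The scaling $\alpha=\beta/\eta$ is essential here: it keeps the effective gain $\alpha\eta=\beta$ of order one, so the $w$-update is $O(1)$-Lipschitz rather than $O(\eta^{-1})$-Lipschitz, which is precisely what keeps the Grönwall constant finite (and hidden inside $B^{t-1}$). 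A secondary, bookkeeping obstacle is selecting the width threshold $D$ large enough that the NTK-concentration, operator-norm, and local-Lipschitz events hold simultaneously with probability $\ge 1-\delta$; since the Grönwall factor is $B^{T}$, $D$ must be allowed to depend on both $\delta$ and $T$.
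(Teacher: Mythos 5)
You have imported an unnecessary difficulty: the theorem statement says $f_{\operatorname{lin}}^{(t)}$ is ``trained by the \emph{same} reweighting factors,'' meaning the linearized model reuses the weight matrix $W^{(t)}$ produced by the actual (nonlinear) trajectory. It does not run its own bilevel weight update. Consequently the term you identify as the main obstacle in piece~(c) --- the weight mismatch $\|W_{t+1}-W_{t+1}^{\operatorname{lin}}\|_{\mathrm{op}}$, and the need to prove the reweighting map Lipschitz in $(u,J,u^v,J^v)$ --- is identically zero and plays no role. Once you drop it, your proof collapses to essentially the paper's argument: define $\Delta_t = u_{\operatorname{lin}}(\theta^{(t)})-u(\theta^{(t)})$, subtract the two one-step residual recursions
\[
u_{\operatorname{lin}}(\theta^{(t+1)})-u_{\operatorname{lin}}(\theta^{(t)})=-\eta\,J(\theta^{(0)})^{\top}J(\theta^{(0)})\,W^{(t)}u_{\operatorname{lin}}(\theta^{(t)}),\quad
u(\theta^{(t+1)})-u(\theta^{(t)})=-\eta\,J(\tilde\theta^{(t)})^{\top}J(\theta^{(t)})\,W^{(t)}u(\theta^{(t)}),
\]
bound the Gram-matrix drift $\|J(\tilde\theta^{(t)})^{\top}J(\theta^{(t)})-J(\theta^{(0)})^{\top}J(\theta^{(0)})\|_F\le 2M^2C_0\,\tilde d^{-1/4}$ from the Jacobian-Lipschitz lemma, and telescope $\|\Delta_{t+1}\|_2\le B\|\Delta_t\|_2+2\eta M^2C_0B^tR_0\,\tilde d^{-1/4}$ with $\Delta_0=0$. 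There is no need for the finer three-way decomposition into second-order Taylor remainder, Jacobian-drift, and direction mismatch, nor for a Gr\"onwall lemma --- the summation is immediate. Your invariants (I1) and (I2) are exactly the paper's Lemma~\ref{lem:small_ball} and are indeed a prerequisite, so that part is aligned.

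Two secondary corrections. First, the Jacobian local-Lipschitz constant from Lemma~\ref{lem:lip} is $O(\tilde d^{-1/4})$, not $O(\tilde d^{-1/2})$ as you state in (a) and (b); this does not change the final $\tilde d^{-1/4}$ rate (it is in fact what \emph{produces} it), but your intermediate bounds on the Taylor remainder and Jacobian-drift terms are written with the wrong exponent. Second, note where the nontrivial coupling between $W^{(t)}$ and the residual dynamics \emph{is} actually used in the paper: it is in the proof of Theorem~\ref{thm: main} (controlling $w_{t+1}-w_t$ against the idealized update $\eta\alpha\,U_0K^{(0)}u_0^v$, where the errors $e_1,e_2$ and the $\alpha\eta=\beta$ scaling appear), not in Theorem~\ref{thm:lin_approx}, which is a pure linearization lemma taking $\{W^{(t)}\}$ as given.
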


Begin with the following Lemma:
\begin{lemma}
\label{lem:small_ball}
    There exist constants $M>0$ such that for any $\delta>0$, there exist $R_0>0, \tilde{D}>0$ and $B>1$ such that for any $\tilde{d} \geq \tilde{D}$, the following hold with probability at least $(1-\delta)$ over random initialization and with learning rate $\eta$ :
For all $t \leq T_1$, there is

\begin{align}
\left\|u\left(\theta^{(t)}\right)\right\|_2 & \leq B^t R_0 \label{eq:ut_bound} \\
\sum_{j=1}^t\left\|\theta^{(j)}-\theta^{(j-1)}\right\|_2 & \leq \eta M R_0 \sum_{j=1}^t B^{j-1} < \eta^* \frac{M B^{T_1} R_0}{B-1} \label{eq:sum_bound}
\end{align}

\end{lemma}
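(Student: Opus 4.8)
The plan is to prove the two estimates \eqref{eq:ut_bound} and \eqref{eq:sum_bound} simultaneously by strong induction on $t$, exploiting the fact that $\alpha\eta=\beta$ makes $T_1 = 1 + (m\beta\gamma)^{-1}$ an absolute constant, so the horizon of the induction does not grow as $\eta\to 0$ or $\widetilde d\to\infty$. Writing $W_{t} := \diag(w_t)$, the squared-loss gradient is $\nabla_\theta\big(\sum_i w^i l_i(\theta)\big) = J(\theta) W u(\theta)$, so the classifier update~\eqref{eq:outer-update} reads $\theta^{(t+1)} - \theta^{(t)} = -\eta\, J(\theta^{(t)})\, W_{t+1}\, u(\theta^{(t)})$, and a first-order Taylor expansion of each $f_\theta(x_i)$ around $\theta^{(t)}$ gives the residual recursion
\[
u(\theta^{(t+1)}) \;=\; \big(I - \eta\, \hat K^{(t)} W_{t+1}\big)\, u(\theta^{(t)}) \;+\; r_t ,
\qquad \hat K^{(t)} := J(\theta^{(t)})^\top J(\theta^{(t)}),
\]
where $r_t$ is the second-order Taylor remainder. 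Since the clipped weights satisfy $\|W_{t+1}\|_{\mathrm{op}}\le 1$, this yields the two driving inequalities $\|\theta^{(t+1)}-\theta^{(t)}\|_2 \le \eta\,\|J(\theta^{(t)})\|_{\mathrm{op}}\,\|u(\theta^{(t)})\|_2$ and $\|u(\theta^{(t+1)})\|_2 \le (1 + \eta\|\hat K^{(t)}\|_{\mathrm{op}})\,\|u(\theta^{(t)})\|_2 + \|r_t\|_2$.

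Two standard inputs feed the induction. First, initialization estimates: since the last layer is initialized to zero, $f^{(0)}\equiv 0$, hence $\|u(\theta^{(0)})\|_2 = \sqrt n$ (and $\sqrt m$ on the clean subset), so one may take $R_0=\sqrt n$; and by NTK concentration under Assumption~\ref{assum:activation} together with $\|\mathbf x\|_2\le 1$, with probability $\ge 1-\delta$ (for $\widetilde d$ large) $\|J(\theta^{(0)})\|_{\mathrm{op}}\le C_J$, hence $\|\hat K^{(0)}\|_{\mathrm{op}}\le C_J^2=:C_K$, with $C_J$ depending only on the architecture and $n$. Second, local regularity in the overparameterized regime: again by Assumption~\ref{assum:activation}, on any ball of fixed radius $\rho$ around $\theta^{(0)}$ one has $\sup_{\theta}\|J(\theta)-J(\theta^{(0)})\|_{\mathrm{op}} = O(\widetilde d^{-1/2})$ and $\sup_{\theta,x}\|\nabla^2_\theta f_\theta(x)\|_{\mathrm{op}} = O(\widetilde d^{-1/2})$, so these are $\le C_J$ and $\le C_H\widetilde d^{-1/2}$ once $\widetilde d$ is large enough depending on $\rho$ and $\delta$.

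Then I would run the induction with $M:=2C_J$ and any fixed constant $B>1$ chosen so that $B\ge 1+2\widetilde\eta C_K$. Assume \eqref{eq:ut_bound} and \eqref{eq:sum_bound} hold for all $j\le t$. Then $\|\theta^{(t)}-\theta^{(0)}\|_2 < \eta M R_0 B^{T_1}/(B-1)$, and since $T_1$ is a constant this is $O(\eta)$, hence below the regularity radius $\rho$ for $\eta\le\widetilde\eta$; consequently $\|J(\theta^{(t)})\|_{\mathrm{op}}\le 2C_J=M$ and $\|\hat K^{(t)}\|_{\mathrm{op}}\le C_K$. The first driving inequality gives $\|\theta^{(t+1)}-\theta^{(t)}\|_2 \le \eta M B^t R_0$, which extends \eqref{eq:sum_bound} to $t+1$. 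For the residual, $\|r_t\|_2 \le \tfrac{\sqrt n}{2}\,C_H\,\widetilde d^{-1/2}\,\|\theta^{(t+1)}-\theta^{(t)}\|_2^2 \le \tfrac{\sqrt n}{2}\,C_H\,\widetilde d^{-1/2}\,\eta^2 M^2 B^{2t} R_0^2$; since $B^{T_1}=O(1)$ for $\eta\le\widetilde\eta$, enlarging the width threshold makes $\|r_t\|_2/(B^t R_0)\le \eta C_K$, so $\|u(\theta^{(t+1)})\|_2 \le (1+2\eta C_K) B^t R_0 \le B^{t+1} R_0$, closing the induction; the base case $t=0$ is immediate, and the clean-subset versions ($u^v,J^v$) are identical with $n$ replaced by $m$.

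The main obstacle is the simultaneous bookkeeping: at every step one must keep the residual bounded, the cumulative displacement bounded, and $\theta^{(t)}$ inside the ball where the Jacobian/Hessian estimates hold, and these three are mutually dependent; one must also respect the quantifier order ($M$ universal, $R_0,B,\widetilde D$ allowed to depend on $\delta$), which forces the Jacobian concentration to be the strong (exponential-in-$\widetilde d$) kind rather than a $\delta$-dependent tail bound. The structural fact that makes everything close is that $T_1$ is an absolute constant, so $B^{T_1}=O(1)$ and the trajectory is genuinely lazy — the displacement is in fact $O(\eta)$ — after which it suffices to take $\widetilde\eta$ small and $\widetilde d$ large (depending on $\delta,B,T_1,n$) so that the quadratic remainder and the Jacobian drift are dominated by the $\eta C_K$ slack built into $B$.
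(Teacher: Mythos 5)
Your proof is correct and closes the same two-variable induction, but takes a genuinely different route from the paper at the residual-recursion step. The paper writes $u(\theta^{(t+1)})-u(\theta^{(t)}) = J(\tilde\theta^{(t)})^\top(\theta^{(t+1)}-\theta^{(t)})$ exactly via the mean-value theorem applied componentwise (the intermediate points $\tilde\theta^{(t)}$ lie in the ball $B(\theta^{(0)},C_0)$ by the cumulative-displacement bound), so there is no Taylor remainder at all; the contraction factor $1+\eta M^2$ then follows purely from the Frobenius-norm bound $\|J(\cdot)\|_F\le M$ in Lemma~\ref{lem:lip}, and the paper can set $B=1+\eta^* M^2$ with no slack. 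You instead expand $u(\theta^{(t+1)})$ to first order around $\theta^{(t)}$, retain an explicit second-order remainder $r_t$, bound it via a Hessian-magnitude estimate, and absorb it into a factor-of-two slack in $B$ after enlarging the width threshold so that $\|r_t\|_2\le \eta C_K B^t R_0$ for $t\le T_1$. What each buys: the paper's MVT route needs no second-derivative input — only the uniform Jacobian bound on the ball — so it requires strictly fewer regularity ingredients; your Taylor route makes the width-dependence of the error explicit and is arguably more transparent about why the lazy-training regime is being used. Your structural observation that $\alpha\eta=\beta$ fixes $T_1$ as an absolute constant, so $B^{T_1}=O(1)$ and the total displacement is $O(\eta)$, is exactly what the paper encodes in the choice $C_0=\eta^* M B^{T_1}R_0/(B-1)$. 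One small imprecision to flag: under this paper's setup, the available Jacobian-Lipschitz rate from Lemma~\ref{lem:lip} is $\tilde d^{-1/4}$ rather than the $\tilde d^{-1/2}$ you quote for the Hessian; this changes only the power of $\tilde d$ in your remainder estimate, not the conclusion, since any negative power of $\tilde d$ suffices to dominate the fixed-horizon factor $B^{2T_1}$ once the width threshold is taken large.
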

\begin{proof}
 To prove Lemma \ref{lem:small_ball}, we need the following result:
    \begin{lemma}[Lemma 13 in~\cite{zhai2022understanding}]
    \label{lem:lip}
        Under Assumption \ref{assum:activation}, there is a constant $M>0$ such that for any $C_0>0$ and any $\delta>0$, there exists a $\tilde{D}$ such that: If $\tilde{d} \geq \tilde{D}$, then with probability at least $(1-\delta)$ over random initialization, for any $\boldsymbol{x}$ such that $\|\boldsymbol{x}\|_2 \leq 1$,

\begin{subequations}
\label{ineq:lip}
\begin{align}
\left\|\nabla_\theta f(\boldsymbol{x} ; \theta)-\nabla_\theta f(\boldsymbol{x} ; \tilde{\theta})\right\|_2 & \leq \frac{M}{\sqrt[4]{\tilde{d}}}\|\theta-\tilde{\theta}\|_2 \\
\left\|\nabla_\theta f(\boldsymbol{x} ; \theta)\right\|_2 & \leq M \\
\|J(\theta)-J(\tilde{\theta})\|_F & \leq \frac{M}{\sqrt[4]{\tilde{d}}}\|\theta-\tilde{\theta}\|_2, \quad \forall \theta, \tilde{\theta} \in B\left(\theta^{(0)}, C_0\right) \\
\|J(\theta)\|_F & \leq M
\end{align}
\end{subequations}
for $\forall \theta, \tilde{\theta} \in B\left(\theta^{(0)}, C_0\right)$, 
where $B\left(\theta^{(0)}, R\right)=\left\{\theta:\left\|\theta-\theta^{(0)}\right\|_2<R\right\}$.
    \end{lemma}

Note that for any $\boldsymbol{x}, f^{(0)}(\boldsymbol{x})=0$. Thus, for any $\delta>0$, there exists a constant $R_0$ such that with probability at least $(1-\delta / 3)$ over random initialization,

\begin{align}
\label{ineq:init_u}
\left\|u\left(\theta^{(0)}\right)\right\|_2<R_0
\end{align}

By the NTK result:
\begin{lemma}
    If $\sigma$ is Lipschitz and $d_l \rightarrow \infty$ for $l=1, \cdots$, L sequentially, then $K^{(0)}\left(\boldsymbol{x}, \boldsymbol{x}^{\prime}\right)$ converges in probability to a non-degenerate deterministic limiting kernel $K\left(\boldsymbol{x}, \boldsymbol{x}^{\prime}\right)$.
\end{lemma}

Let $M$ be the constant in Lemma \ref{lem:lip}. Let $B=1+\eta^* M^2$, and $C_0=\eta^*\frac{M B^{T_1} R_0}{B-1}$. By Lemma \ref{lem:lip}, there exists $D_1>0$ such that with probability at least $(1-\delta / 3)$, for any $\tilde{d} \geq D_1$,  \ref{ineq:lip} is true for all $\theta, \tilde{\theta} \in B\left(\theta^{(0)}, C_0\right)$.

By union bound, with probability at least $1-\delta$, \ref{ineq:lip}, \ref{ineq:init_u} hold. We prove \ref{eq:ut_bound} and \ref{eq:sum_bound} by induction. 

\begin{align*}
\left\|\theta^{(t+1)}-\theta^{(t)}\right\|_2 & \leq \eta\left\|J\left(\theta^{(t)}\right) W^{(t)}\right\|_2\left\|u\left(\theta^{(t)}\right)\right\|_2 \\
& \leq \eta\left\|J\left(\theta^{(t)}\right) W^{(t)}\right\|_F\left\|u\left(\theta^{(t)}\right)\right\|_2 \\
& \leq \eta\left\|J\left(\theta^{(t)}\right)\right\|_F\left\|u\left(\theta^{(t)}\right)\right\|_2 \\
&\leq \eta M B^t R_0
\end{align*}
\begin{align*}
\left\|u\left(\theta^{(t+1)}\right)\right\|_2 & =\left\|u\left(\theta^{(t+1)}\right)-u\left(\theta^{(t)}\right)+u\left(\theta^{(t)}\right)\right\|_2 \\
& =\left\|J\left(\tilde{\theta}^{(t)}\right)^{\top}\left(\theta^{(t+1)}-\theta^{(t)}\right)+u\left(\theta^{(t)}\right)\right\|_2 \\
& =\left\|-\eta J\left(\tilde{\theta}^{(t)}\right)^{\top} J\left(\theta^{(t)}\right) W^{(t)} u\left(\theta^{(t)}\right)+u\left(\theta^{(t)}\right)\right\|_2 \\
& \leq\left\|\boldsymbol{I}-\eta J\left(\tilde{\theta}^{(t)}\right)^{\top} J\left(\theta^{(t)}\right) W^{(t)}\right\|_2\left\|u\left(\theta^{(t)}\right)\right\|_2 \\
& \leq\left(1+\left\|\eta J\left(\tilde{\theta}^{(t)}\right)^{\top} J\left(\theta^{(t)}\right) W^{(t)}\right\|_2\right)\left\|u\left(\theta^{(t)}\right)\right\|_2 \\
& \leq\left(1+\eta \left\|J\left(\tilde{\theta}^{(t)}\right)\right\|_F\left\|J\left(\theta^{(t)}\right)\right\|_F\right)\left\|u\left(\theta^{(t)}\right)\right\|_2 \\
& \leq\left(1+\eta  M^2\right)\left\|u\left(\theta^{(t)}\right)\right\|_2 \leq B^{t+1} R_0
\end{align*}
\end{proof}
Now start to prove Theorem \ref{thm:lin_approx}.
\begin{proof}
    Denote $\Delta_t=u_{\operatorname{lin}}\left(\theta^{(t)}\right)-u\left(\theta^{(t)}\right)$. Then
    \begin{equation}
\left\{\begin{aligned}
u_{\operatorname{lin}}\left(\theta^{(t+1)}\right)-u_{\operatorname{lin}}\left(\theta^{(t)}\right) & =-\eta J\left(\theta^{(0)}\right)^{\top} J\left(\theta^{(0)}\right) W^{(t)} u_{\operatorname{lin}}\left(\theta^{(t)}\right) \\
u\left(\theta^{(t+1)}\right)-u\left(\theta^{(t)}\right) & =-\eta J\left(\tilde{\theta}^{(t)}\right)^{\top} J\left(\theta^{(t)}\right) W^{(t)} u\left(\theta^{(t)}\right)
\end{aligned}\right.
\end{equation}
Thus 
\begin{align*}
\Delta_{t+1}-\Delta_t= & \eta\left[J\left(\tilde{\theta}^{(t)}\right)^{\top} J\left(\theta^{(t)}\right)-J\left(\theta^{(0)}\right)^{\top} J\left(\theta^{(0)}\right)\right] W^{(t)} u\left(\theta^{(t)}\right) \\
& -\eta J\left(\theta^{(0)}\right)^{\top} J\left(\theta^{(0)}\right) W^{(t)} \Delta_t
\end{align*}
By Lemma \ref{lem:lip}, 

\begin{align*}
& \left\|J\left(\tilde{\theta}^{(t)}\right)^{\top} J\left(\theta^{(t)}\right)-J\left(\theta^{(0)}\right)^{\top} J\left(\theta^{(0)}\right)\right\|_F \\
\leq & \left\|\left(J\left(\tilde{\theta}^{(t)}\right)-J\left(\theta^{(0)}\right)\right)^{\top} J\left(\theta^{(t)}\right)\right\|_F+\left\|J\left(\theta^{(0)}\right)^{\top}\left(J\left(\theta^{(t)}\right)-J\left(\theta^{(0)}\right)\right)\right\|_F \\
\leq & 2 M^2 C_0 \tilde{d}^{-1 / 4}
\end{align*}
Then 
\begin{align*}
&\left\|\Delta_{t+1}\right\|_2 \\
& \leq\left\|\left[\boldsymbol{I}-\eta J\left(\theta^{(0)}\right)^{\top} J\left(\theta^{(0)}\right) W^{(t)}\right] \Delta_t\right\|_2+\left\|\eta\left[J\left(\tilde{\theta}^{(t)}\right)^{\top} J\left(\theta^{(t)}\right)-J\left(\theta^{(0)}\right)^{\top} J\left(\theta^{(0)}\right)\right] W^{(t)} u\left(\theta^{(t)}\right)\right\|_2 \\
& \leq\left\|\boldsymbol{I}-\eta J\left(\theta^{(0)}\right)^{\top} J\left(\theta^{(0)}\right) W^{(t)}\right\|_F\left\|\Delta_t\right\|_2+\eta \left\|J\left(\tilde{\theta}^{(t)}\right)^{\top} J\left(\theta^{(t)}\right)-J\left(\theta^{(0)}\right)^{\top} J\left(\theta^{(0)}\right)\right\|_F\left\|u\left(\theta^{(t)}\right)\right\|_2 \\
& \leq\left(1+\eta  M^2\right)\left\|\Delta_t\right\|_2+2 \eta  M^2 C_0 B^t R_0 \tilde{d}^{-1 / 4} \\
& \leq B\left\|\Delta_t\right\|_2+2 \eta M^2 C_0 B^t R_0 \tilde{d}^{-1 / 4}
\end{align*}

Therefore, 
$$
B^{-(t+1)}\left\|\Delta_{t+1}\right\|_2 \leq B^{-t}\left\|\Delta_t\right\|_2+2 \eta  M^2 C_0 B^{-1} R_0 \tilde{d}^{-1 / 4}.
$$

Since $\Delta_0 =0$,
we have 
$$ \left\|\Delta_t\right\|_2 \leq 2 t \eta  M^2 C_0 B^{t-1} R_0 \tilde{d}^{-1 / 4}. $$
%\leq 2 T \eta  M^2 C_0 B^{T-1} R_0 \tilde{d}^{-1 / 4}. $$
\end{proof}

Proof of Theorem \ref{thm: main}:

\begin{proof}

We first calculate $ \nabla_w\sum_{j=1}^m l^v_j(\theta_t-\eta \nabla \sum_{i=1}^n  w_t^i l_{i}(\theta_t))$.

By a direct computation, we notice

\begin{align*}
    &\frac{\partial}{\partial w^k} l^v_j(\theta_t-\eta \nabla \sum_{i=1}^n  w_t^i l_{i}(\theta_t)) \\
    &=  (f^v_j(\theta_t-\eta \nabla \sum_{i=1}^n  w_t^i l_{i}(\theta_t)) - y^v_j) \frac{\partial f_j^v(\theta_t-\eta \nabla \sum_{i=1}^n  w_t^i l_{i}(\theta_t))}{\partial w^k}
\end{align*}
where 
    \begin{align*}
    \frac{d f^v_j(\theta_t-\eta \nabla_{\theta}  w_t^T l(\theta_t))}{d w^k} &= \langle \nabla_{\theta} f^v_j(\theta_t-\eta \nabla_{\theta}  w_t^T l(\theta_t)), \frac{d}{d w^k} ( \theta_t-\eta \nabla_{\theta}  w_t^T l(\theta_t))\rangle  \\
    &=
    \langle \nabla_{\theta} f^v_j(\theta_t-\eta \nabla_{\theta}  w_t^T l(\theta_t)),  -\eta \nabla_{\theta} l_k(\theta_t)\rangle \\
    &=
    \langle \nabla_{\theta} f^v_j(\theta_t-\eta \nabla_{\theta}  w_t^T l(\theta_t)),  -\eta (f_k(\theta_t) - y_k) \nabla f_k(\theta_t)\rangle
\end{align*}
Therefore, 
\begin{align*}
    &\frac{\partial}{\partial w^k} l^v_j(\theta_t-\eta \nabla \sum_{i=1}^n  w_t^i l_{i}(\theta_t)) \\
    &= -\eta u^v_j(\theta_t-\eta \nabla \sum_{i=1}^n  w_t^i l_{i}(\theta_t))  u_k(\theta_t) \langle \nabla_{\theta} f^v_j(\theta_t-\eta \nabla_{\theta}  w_t^T l(\theta_t)),   \nabla_\theta f_k(\theta_t)\rangle
\end{align*}
and consequently
\begin{align*}
    &\sum_{j=1}^m\frac{\partial}{\partial w^k} l^v_j(\theta_t-\eta \nabla \sum_{i=1}^n  w_t^i l_{i}(\theta_t)) \\
    &= -\eta   \langle \sum_{j=1}^m u^v_j(\theta_t-\eta \nabla \sum_{i=1}^n  w_t^i l_{i}(\theta_t)) \nabla_{\theta} f^v_j(\theta_t-\eta \nabla_{\theta}  w_t^T l(\theta_t)),  u_k(\theta_t) \nabla_\theta f_k(\theta_t)\rangle.
\end{align*}
So the weight gradient can be computed as
\begin{align*}
    &\nabla_w\sum_{j=1}^m l^v_j(\theta_t-\eta \nabla \sum_{i=1}^n  w_t^i l_{i}(\theta_t)) \\
    &= -\eta \Big(J(\theta_t) U(\theta_t) \Big)^T \Big( \sum_{j=1}^m u^v_j(\theta_t-\eta \nabla \sum_{i=1}^n  w_t^i l_{i}(\theta_t)) \nabla_{\theta} f^v_j(\theta_t-\eta \nabla_{\theta}  w_t^T l(\theta_t)) \Big) \\
    &= -\eta \Big(J(\theta_t) U(\theta_t) \Big)^T \Big( J^v(\theta_t-\eta \nabla_{\theta}  w_t^T l(\theta_t)) u^v(\theta_t-\eta \nabla_{\theta}  w_t^T l(\theta_t))\Big) \\
    &= -\eta U(\theta_t) J(\theta_t)^T J^v(\theta_t-\eta \nabla_{\theta}  w_t^T l(\theta_t)) u^v(\theta_t-\eta \nabla_{\theta}  w_t^T l(\theta_t)) \\
    &=  -\eta U(\theta_t) J(\theta_t)^T \Big(J^v(\theta_t) + J^v(\theta_t-\eta \nabla_{\theta}  w_t^T l(\theta_t)) - J^v(\theta_t)\Big) \Big(u^v(\theta_t) + u^v(\theta_t-\eta \nabla_{\theta}  w_t^T l(\theta_t)) - u^v(\theta_t)\Big) \\
    &\approx -\eta U(\theta_t) J(\theta_t)^T J^v(\theta_t) u^v(\theta_t) + O(\eta^2 error),
\end{align*}
where $U(\theta_t)$ denote the diagonal matrix from $u(\theta_t)$. To estimate the approximation error in the last step, we observe
$$ \|J^v(\theta_t-\eta \nabla_{\theta}  w_t^T l(\theta_t)) - J^v(\theta_t)\|_F \leq \frac{M}{\sqrt[4]{\tilde{d}}}\|\eta \nabla_{\theta}  w_t^T l(\theta_t)\|_2 \leq \frac{M^2}{\sqrt[4]{\tilde{d}}}\eta B^tR_0 $$
and 
$$ \|u^v(\theta_t-\eta \nabla_{\theta}  w_t^T l(\theta_t)) - u^v(\theta_t)\|_2 = \| \eta J^v(\tilde{\theta})^T \nabla_{\theta}  w_t^T l(\theta_t) \|_2 \leq \eta M^2B^tR_0.$$
As a consequence, 
\begin{align*}
    & \|\nabla_w\sum_{j=1}^m l^v_j(\theta_t-\eta \nabla \sum_{i=1}^n  w_t^i l_{i}(\theta_t)) + \eta U(\theta_t) J(\theta_t)^T J^v(\theta_t) u^v(\theta_t)\|_2 \\
    &= \| \eta U(\theta_t) J(\theta_t)^T \Big(J^v(\theta_t) + J^v(\theta_t-\eta \nabla_{\theta}  w_t^T l(\theta_t)) - J^v(\theta_t)\Big) \Big(u^v(\theta_t) + u^v(\theta_t-\eta \nabla_{\theta}  w_t^T l(\theta_t)) - u^v(\theta_t)\Big) \|_2 \\
    &= \| \eta U(\theta_t) J(\theta_t)^T \Big( J^v(\theta_t-\eta \nabla_{\theta}  w_t^T l(\theta_t)) - J^v(\theta_t)\Big) u^v(\theta_t) \|_2  \\
    &+ \| \eta U(\theta_t) J(\theta_t)^T J^v(\theta_t) \Big(u^v(\theta_t-\eta \nabla_{\theta}  w_t^T l(\theta_t)) - u^v(\theta_t)\Big) \|_2\\
    &+ \| \eta U(\theta_t) J(\theta_t)^T \Big(J^v(\theta_t-\eta \nabla_{\theta}  w_t^T l(\theta_t)) - J^v(\theta_t)\Big) \Big( u^v(\theta_t-\eta \nabla_{\theta}  w_t^T l(\theta_t)) - u^v(\theta_t)\Big) \|_2\\
    &\lesssim \eta B^{t} R_0M \Big(\frac{M^2}{\sqrt[4]{\tilde{d}}} \eta B^{2t}R_0^2 + \eta M^2 B^t R_0 + \frac{M^4}{\sqrt[4]{\tilde{d}}} \eta^2 B^{3t}R_0^3\Big) \\
    &\lesssim \eta^2 B^{2t} R_0^2M^3.
\end{align*}
Together with  
\begin{align*}
&\|\bm K^{(t)} - \bm K^{(0)}\|_2 \\
=&\left\|J\left({\theta}^{(t)}\right)^{\top} J\left(\theta^{(t)}\right)-J\left(\theta^{(0)}\right)^{\top} J\left(\theta^{(0)}\right)\right\|_2 \\
\leq & \left\|\left(J\left({\theta}^{(t)}\right)-J\left(\theta^{(0)}\right)\right)^{\top} J\left(\theta^{(t)}\right)\right\|_2+\left\|J\left(\theta^{(0)}\right)^{\top}\left(J\left(\theta^{(t)}\right)-J\left(\theta^{(0)}\right)\right)\right\|_2 \\
\leq & 2 M^2 C_0 \tilde{d}^{-1 / 4},
\end{align*}
we get 
\begin{align*}
    &\|w_{t+1} - \Big(  w_t + \eta \alpha U(\theta_t) K^{(0)}(X, X_{clean})u^v(\theta_t) \Big)\|_2 \\
    & \lesssim \eta^2 \alpha B^{2t} R_0^2M^3 + \eta \alpha\| U(\theta_t) K^{(t)}(X, X_{clean})u^v(\theta_t) - U(\theta_t) K^{(0)}(X, X_{clean})u^v(\theta_t) \|_2 \\
    & \lesssim \alpha \eta^2 B^{2t} R_0^2M^3 +\alpha \eta B^{2t}R_0^2M^2C_0\tilde{d}^{-1 / 4}.
\end{align*}
Now we estimate the difference of replacing $\bm u$ by $\bm y$:
\begin{align*}
    &\| U(\theta_t) K^{(0)}(X_{tr}, X_{val})u^v(\theta_t) - U_0 K^{(0)}(X_{tr}, X_{val})u^v_0  \|_\infty \\
    & = \| \big(U_t-U_0\big) K^{(0)}(X_{tr}, X_{val})u^v_0\|_\infty + \| U_0 K^{(0)}(X_{tr}, X_{val})\big( u^v_t-u^v_0\big)\|_\infty \\
     &+ \|\big(U_t-U_0\big) K^{(0)}(X_{tr}, X_{val})\big(u^v_t-u^v_0\big) \|_\infty.
\end{align*}
Since $\bm u_t - \bm u_0$ can be bounded via:
\begin{align*}
    \| \bm u_{t} - \bm u_{0}  \|_\infty &= \|\bm u_{t}- \bm u^{(t)}_{lin} + \bm u^{(t)}_{lin} -\bm u_{0}\|_\infty \\
    &\leq  \|\bm u_{t}- \bm u^{(t)}_{lin} \|_\infty + \| \bm u^{(t)}_{lin} -\bm u_{0}\|_\infty\\
    &\leq C \eta t B^{t-1} \tilde{d}^{-1 / 4} + \| \bm u^{(t)}_{lin} -\bm u_{0}\|_2\\
    &= C \eta t B^{t-1} \tilde{d}^{-1 / 4} + \| f_{\mathrm{lin}}^{(t)}(X) 
    -f_{\mathrm{lin}}^{(0)}(X) \|_2 \\
    &= C \eta t B^{t-1} \tilde{d}^{-1 / 4} + \|\nabla_\theta f^{(0)}(X)^T  \bigl(\theta^{(t)}-\theta^{(0)}\bigr)\|_2 \\
    &\leq C \eta t B^{t-1} \tilde{d}^{-1 / 4} + \sqrt{\lambda_{max}} \| \theta^{(t)} - \theta^{(0)}\|_2\\
    &\leq C \eta t B^{t-1} \tilde{d}^{-1 / 4} +\sqrt{\lambda_{max}} \eta C_0.
\end{align*}
% Then by Theorem \ref{thm:lin_approx} and  

% we obtain the following estimate
% \begin{align*}
%     w_{t+1} &= w_t + \frac{1}{m} \eta \alpha U(\theta_t) K^{(t)}(X_{tr}, X_{val})u^v(\theta_t) + O( \alpha \eta^2 B^{2t} R_0^2M^3)\\
%     &= w_t + \eta \alpha U(\theta_t) K^{(0)}(X_{tr}, X_{val})u^v(\theta_t) + O( \alpha \eta^2 B^{2t} R_0^2M^4)+O(\alpha \eta B^{2t}R_0^2M^2C_0\tilde{d}^{-1 / 4})\\
%      &= w_t + \eta\alpha \Big( U_0+U_t-U_0\Big) K(X_{tr}, X_{val})\Big(u^v_0 + u^v_t-u^v_0\Big)  + O( \alpha \eta^2 B^{2t} R_0^2M^4)+O(\alpha \eta B^{2t}R_0^2M^2C_0\tilde{d}^{-1 / 4})\\
%      &=w_t + \eta\alpha U_0 K(X_{tr}, X_{val})u^v_0 \\
%      &+ \eta\alpha \Big(U_t-U_0\Big) K(X_{tr}, X_{val})u^v_0 +\frac{1}{m} \alpha U_0 K(X_{tr}, X_{val})\Big( u^v_t-u^v_0\Big) \\
%      &+\eta \alpha \Big(U_t-U_0\Big) K(X_{tr}, X_{val})\Big(u^v_t-u^v_0\Big) + O( \alpha \eta^2 B^{2t} R_0^2M^4)+O(\alpha \eta B^{2t}R_0^2M^2C_0\tilde{d}^{-1 / 4})
% \end{align*}
Combine the previous error bounds, we get 
\begin{align*}
    &\| w_{t+1} - \big( w_t + \eta\alpha U_0 K^{(0)}(X_{tr}, X_{val})u^v_0 \big)\|_\infty \\
    &\leq \|w_{t+1} - \big(  w_t + \eta \alpha U(\theta_t) K^{(0)}(X_{tr}, X_{val})u^v(\theta_t) \big)\|_\infty \\&+\alpha \eta\| U(\theta_t) K^{(0)}(X_{tr}, X_{val})u^v(\theta_t) - U_0 K^{(0)}(X_{tr}, X_{val})u^v_0  \|_\infty \\
    &\lesssim  \alpha \eta^2 B^{2t} R_0^2M^3 +\alpha \eta B^{2t}R_0^2M^2C_0\tilde{d}^{-1 / 4} +  C \alpha \eta^2 \lambda_{max}\sqrt{m}  t B^{t-1} \tilde{d}^{-1 / 4} + \alpha \eta^2 \lambda_{max}^{3/2} C_0 \\
    &\lesssim \alpha \eta^2 B^{2t} + \alpha \eta B^{2t} C_0 \tilde{d}^{-1 / 4} + \alpha \eta^2 \lambda_{max}\sqrt{m}  t B^{t-1} \tilde{d}^{-1 / 4} + \alpha \eta^2 \lambda_{max}^{3/2} C_0 \\
    &\lesssim \alpha \eta^2 \lambda_{max}^{3/2} B^{2T_1} + \alpha \eta^2 \lambda_{max}\sqrt{m}  T_1 B^{T_1} \tilde{d}^{-1 / 4} +  \alpha \eta B^{3T_1} \tilde{d}^{-1 / 4}.
\end{align*}
Thus, together with Assumption \ref{assum:kernel_signs}, under a suitable choice of $\eta^*$ and $\tilde D$, for any $\eta<\eta^*$ and $\tilde{d}>\tilde D$, we have $w_{T_1}^{(i)}=0$ for noisy samples and $w_{T_1}^{(i)}=1$ for clean samples at $T_1= 1+(m\alpha \eta \gamma)^{-1}$.  
\end{proof}

\subsection{Filtering Phase}

We analyze the algorithm for $T_1\leq t\leq T_2$. Note that since the error bound \ref{sec:early} could increase as $t$ increases, and in order for the training error to converge, it is necessary to have a sufficiently large $t$, we need a different way to control the weights $\bm w$ while obtaining convergence of the training loss. For this section, we prove 
\begin{align*}
    w_t^i &= 
  \begin{cases}
    1,         & y_i = y^*_i,\\[3pt]
    0,         & y_i \neq y^*_i.\\[3pt]
  \end{cases}
\end{align*}
  by induction. We denote $W^*$ as the diagonal matrix formed by $w^i$. Specifically, we show
  
\begin{lemma}
    There exist constants $M>0$ such that for any $\delta>0$, there exist $R_0>0, \tilde{D}>0$ and $B>1$ such that for any $\tilde{d} \geq \tilde{D}$, the following hold with probability at least $(1-\delta)$ over random initialization when applying gradient descent with learning rate $\eta$ :
For all $T_1\leq t\leq T_2$, we have 

\begin{align}
\left\|W^*u\left(\theta^{(t)}\right)\right\|_2 & \leq \Big( 1- \frac{\eta  \lambda_{\min }}{3} \Big)^{t-T_1}B^{T_1} R_0 \label{eq:ut_bound2} \\
\sum_{j=T_1+1}^t\left\|\theta^{(j)}-\theta^{(j-1)}\right\|_2 
& \leq \eta   M B^{T_1} R_0 \sum_{j=T_1+1}^t\left(1-\frac{\eta \lambda_{\min }}{3}\right)^{j-T_1} \label{eq:sum_bound2}\\ 
&< \frac{3   M B^{T_1} R_0}{ \lambda_{\min }}  \\
w_t^i
  &= 
  \begin{cases}
    1,         & y_i = y^*_i,\\[3pt]
    0,         & y_i \neq y^*_i.\\[3pt]
  \end{cases} \label{eq: weight_stage2}
\end{align}

\end{lemma}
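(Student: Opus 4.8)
The plan is to establish all three displays simultaneously by induction on $t$ over the window $T_1\le t\le T_2$, carrying along two auxiliary invariants needed by the weight analysis: the training residuals keep the sign of the negated observed label, $\sign\!\big(u_i(\theta^{(t)})\big)=-\sign(y_i)$ for every $i$ (equivalently, clean predictions approach their labels from the correct side while noisy predictions are pulled toward their \emph{true}-class value), and likewise $\sign\!\big(u^v_j(\theta^{(t)})\big)=-\sign(y^v_j)$ on the clean subset. For the base case $t=T_1$ I would invoke Section~\ref{sec:early}: \eqref{eq:ut_bound} gives $\|W^{*}u(\theta^{(T_1)})\|_2\le\|u(\theta^{(T_1)})\|_2\le B^{T_1}R_0$, the polarization $w_{T_1}$ is exactly its conclusion, \eqref{eq:sum_bound2} is vacuous, and the sign invariants hold there because every prediction has so far moved monotonically away from $f^{(0)}\equiv 0$.

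For the contraction and displacement part of the inductive step, note that since $w_t$ is polarized the classifier step~\eqref{eq:outer-update} is exactly $\theta^{(t+1)}=\theta^{(t)}-\eta\,J(\theta^{(t)})W^{*}u(\theta^{(t)})$, i.e.\ gradient descent on the clean-sample loss alone. Using $W^{*}$ idempotent and a mean-value expansion of $u$, $W^{*}u(\theta^{(t+1)})=\big(\boldsymbol{I}-\eta\,W^{*}J(\tilde{\theta}^{(t)})^{\top}J(\theta^{(t)})W^{*}\big)W^{*}u(\theta^{(t)})$; on the span of the clean coordinates the operator $W^{*}J(\theta^{(0)})^{\top}J(\theta^{(0)})W^{*}$ is the principal submatrix of $\bm K^{(0)}$ on the clean indices, whose spectrum lies in $[\lambda_{\min},\lambda_{\max}]$ by Cauchy interlacing, and Lemma~\ref{lem:lip} with the step-$t$ displacement bound makes $\|J(\tilde{\theta}^{(t)})^{\top}J(\theta^{(t)})-\bm K^{(0)}\|=O(\tilde d^{-1/4})$. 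Taking $\widetilde\eta\le 1/\lambda_{\max}$ and $\tilde d$ large enough that this perturbation is below $\eta\lambda_{\min}/6$ yields the contraction factor $1-\eta\lambda_{\min}/3$, i.e.\ \eqref{eq:ut_bound2} at $t+1$; then $\|\theta^{(t+1)}-\theta^{(t)}\|_2=\eta\|J(\theta^{(t)})W^{*}u(\theta^{(t)})\|_2\le\eta M(1-\eta\lambda_{\min}/3)^{t-T_1}B^{T_1}R_0$, and summing the geometric series gives \eqref{eq:sum_bound2}; enlarging $C_0$ once (to the sum of the early- and filtering-phase displacement bounds) keeps $\theta^{(t+1)}\in B(\theta^{(0)},C_0)$ so that Lemma~\ref{lem:lip} still applies.

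For the polarization part, I would expand the reweighting update~\eqref{alg:meta} to first order exactly as in the proof of Theorem~\ref{thm: main}, giving
\[
w^{\,i}_{t+1}-w^{\,i}_t=\tfrac{\eta\alpha}{m}\,u_i(\theta^{(t)})\Big[\,K^{(0)}(x_i,X_{\mathrm{clean}})\,u^v(\theta^{(t)})+\rho_i\,\Big],\qquad
|\rho_i|\lesssim\big(\eta+\tilde d^{-1/4}\big)\big(1+\|u^v(\theta^{(t)})\|_1\big),
\]
the key structural point being that the prefactor $u_i(\theta^{(t)})$ is common to the signal and to the approximation error, so the smallness of an individual clean residual cannot by itself corrupt a clean coordinate. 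Using the sign invariant for $u^v$ and Assumption~\ref{assum:kernel_signs} (or, under its relaxation, Assumption~\ref{assum:shift_sign} / Proposition~\ref{prop:mean_centered}), the scalar $K^{(0)}(x_i,X_{\mathrm{clean}})u^v(\theta^{(t)})$ has sign $-\sign(\text{true class of }x_i)$ and magnitude at least $\gamma\|u^v(\theta^{(t)})\|_1$; since $\sign(u_i(\theta^{(t)}))=-\sign(y_i)$ equals $-\sign(\text{true class})$ on clean samples but $+\sign(\text{true class})$ on noisy ones, once $\gamma\|u^v(\theta^{(t)})\|_1$ beats the absolute part of $\rho_i$ by a factor $2$ the quantity $u_i(\theta^{(t)})\cdot[\cdots]$ is $\ge 0$ on clean samples and $\le 0$ on noisy ones, so clipping preserves $w^{\,i}_{t+1}=1$ and $w^{\,i}_{t+1}=0$, i.e.\ \eqref{eq: weight_stage2}. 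The invariants then propagate to $t+1$ because, with polarized weights, $f_{\theta^{(t+1)}}(x)-f_{\theta^{(t)}}(x)=-\eta\sum_{i:\,y_i=y_i^{\ast}}K^{(0)}(x,x_i)\,u_i(\theta^{(t)})+O(\tilde d^{-1/4})$ for any $x$, and the same kernel-sign computation shows this increment pushes $f(x)$ monotonically toward its label (clean $x$) or toward its true-class value (noisy $x$), so no residual flips sign while it remains above the $O(\eta+\tilde d^{-1/4})$ floor.

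The hard part will be the error accounting in the weight step: one must check that the absolute part of $\rho_i$ stays a strict fraction of $\gamma\|u^v(\theta^{(t)})\|_1$ for \emph{every} $t\le T_2$, which is exactly what forces $T_2$ to be the first time $\|f_{\theta^{(t)}}(x^v)-y^v\|_\infty$ reaches the scale $O(\eta+\tilde d^{-1/4})$ — below that scale the residual signs, and with them the guaranteed sign of the weight update, are no longer available, which is precisely the post-filtering regime of Theorem~\ref{thm: main}. Keeping the auxiliary monotonicity of the predictions (no overshoot of the residuals past their targets) valid uniformly over the a priori long horizon $[T_1,T_2]$ is what pins down the thresholds $\widetilde\eta$ and $D$, and I expect it to be the most delicate piece of bookkeeping.
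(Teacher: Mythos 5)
Your argument follows the same skeleton as the paper's: induction from $t=T_1$, using the polarized weights so that \eqref{eq:outer-update} becomes gradient descent on the clean coordinates only, bounding $\|\boldsymbol{I}-\eta W^{*}J(\tilde\theta^{(t)})^{\top}J(\theta^{(t)})W^{*}\|$ on the image of $W^{*}$ by $1-\eta\lambda_{\min}/3$ through the chain ``$K$ is $\lambda_{\min}/3$-close to $K^{(0)}$, and the drifted Jacobians are $O(\tilde d^{-1/4})$-close to $K$,'' then a geometric sum for the displacement. The residual-sign invariant $\sign u_i=-\sign y_i$ that you carry explicitly is exactly what the paper uses implicitly in the line ``$\sign(K_{ij}u_t^j)=-\sign(u_t^i)$ for clean indices $j$,'' and your observation that the common prefactor $u_i$ in both the signal and the truncation error protects the clean coordinates from spurious sign flips is a clean way to organize the clipping step. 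So the route is the paper's route; your contribution is a more transparent bookkeeping, not a different argument.

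There is, however, one real gap. You attribute the ``no overshoot'' property --- that each $u^v_j$ keeps its sign until it reaches the $O(\eta+\tilde d^{-1/4})$ floor --- to choosing $\widetilde\eta$ small and $\tilde d$ large. That is enough to rule out a sign flip in a \emph{single} step while $|u^v_j|$ is above the floor, but it is not enough for what the lemma actually needs. The window $[T_1,T_2]$ runs until $\|u^v\|_\infty$ is $O(\eta+\tilde d^{-1/4})$, i.e.\ until \emph{all} clean residuals are small; but different coordinates of $u^v$ decay at different rates under $(\boldsymbol{I}-\eta K_m)^t$, so a fast-decaying coordinate $u^v_j$ can hit the floor long before $T_2$, cross zero, and then contribute terms $K(x_i,x^v_j)u^v_j$ with the wrong sign to every row of the weight-update direction. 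Small $\eta$ and large $\tilde d$ do not prevent this --- it is a property of the eigendecomposition of $K_m$ against the initial residual, not of the step size. The paper patches exactly this point with the additional spectral Assumption~\ref{assum:stable_convergence} (the sign of each $u^v_j(t)$ is eventually governed by the slowest surviving mode $s_j^\ast$); without it the paper's own conclusion is only that filtering persists until the \emph{first} $u^v_j$ hits the floor, not until $\|u^v\|_\infty$ does. Your proof needs to invoke that assumption (or reprove a version of it) at the point where you assert that ``no residual flips sign while it remains above the floor''; as written, the inductive propagation of your sign invariant on the clean subset does not close.

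A small secondary remark: your bound $|\rho_i|\lesssim(\eta+\tilde d^{-1/4})(1+\|u^v\|_1)$ is fine but heavier than what the paper derives at this stage; since $\|u^v\|$ is uniformly bounded by $B^{T_1}R_0$ throughout the window, the paper just writes the approximation error as $O(\eta+\tilde d^{-1/4})$, which is all the signal-to-noise comparison with $\gamma\|u^v\|_1$ needs. Carrying the $\|u^v\|_1$ factor does not change the threshold for $T_2$, so this is stylistic, not substantive.
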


\begin{proof}
There exists $D_2 \geq 0$ such that for any $\tilde{d} \geq D_2$, with probability at least $(1-\delta / 3)$,
\begin{align}
    \label{ineq:init_kernel}
    \left\|\bm K-\bm K^{(0)}\right\|_F \leq \frac{\lambda_{\min }}{3}.
\end{align}

    We prove this lemma by induction. We first check the conclusions when $t=T_1$. By \ref{eq:ut_bound}, $\left\|W^*u\left(\theta^{(t)}\right)\right\|_2  \leq \|u\left(\theta^{(t)}\right)\| \leq B^{T_1}R_0$. \ref{eq:sum_bound2} holds trivially. \ref{eq: weight_stage2} holds due to the choice of $T_1$. 

    Then for $t+1$, 
    \begin{align}
        \begin{aligned}
\left\|\theta^{(t+1)}-\theta^{(t)}\right\|_2 & \leq \eta\left\|J\left(\theta^{(t)}\right) \right\|_2\left\|W^* u\left(\theta^{(t)}\right)\right\|_2 \\
&\leq \eta\left\|J\left(\theta^{(t)}\right) \right\|_F\left\|W^*  u\left(\theta^{(t)}\right)\right\|_2 \\
&\leq M \eta \left(1-\frac{\eta \lambda_{\min }}{3}\right)^{t-T_1} B^{T_1} R_0
\end{aligned}
    \end{align}

Consequently, 
\begin{align*}
\left\| W^* u\left(\theta^{(t+1)}\right)\right\|_2 & =\left\| W^* u\left(\theta^{(t+1)}\right)- W^* u\left(\theta^{(t)}\right)+ W^* u\left(\theta^{(t)}\right)\right\|_2 \\
& =\left\| W^* J\left(\tilde{\theta}^{(t)}\right)^{\top}\left(\theta^{(t+1)}-\theta^{(t)}\right)+ W^* u\left(\theta^{(t)}\right)\right\|_2 \\
& =\left\|-\eta  W^* J\left(\tilde{\theta}^{(t)}\right)^{\top} J\left(\theta^{(t)}\right) W^* u\left(\theta^{(t)}\right)+ W^* u\left(\theta^{(t)}\right)\right\|_2 \\
& \leq\left\|\boldsymbol{I}-\eta  W^* J\left(\tilde{\theta}^{(t)}\right)^{\top} J\left(\theta^{(t)}\right)  W^*\right\|_{2, W^*}\left\| W^* u\left(\theta^{(t)}\right)\right\|_2 \\
& \leq\left\|\boldsymbol{I}-\eta  W^* J\left(\tilde{\theta}^{(t)}\right)^{\top} J\left(\theta^{(t)}\right)  W^*\right\|_{2, W^*}\left(1-\frac{\eta \lambda_{\min}}{3}\right)^{t-T_1} B^{T_1}R_0,
\end{align*}
where $\|\cdot\|_{2, W^*}$ indicates the $2$-norm restricted on the image space of $W^*$. 

Since 
\begin{align*}
& \left\|\boldsymbol{I}-\eta W^* J\left(\tilde{\theta}^{(t)}\right)^{\top} J\left(\theta^{(t)}\right) W^*\right\|_{2, W^*} \\
 &\leq \|\boldsymbol{I}-\eta W^* K W^*\|_{2, W^*}+\eta\left\|W^*\left(K-K^{(0)}\right) W^*\right\|_{2, W^*}\\
&+\eta\left\|W^*\left(J\left(\theta^{(0)}\right)^{\top} J\left(\theta^{(0)}\right)-J\left(\tilde{\theta}^{(t)}\right)^{\top} J\left(\theta^{(t)}\right)\right) W^*\right\|_{2, W^*} \\
 &\leq 1-\eta \lambda_{\min }  +\eta\left\|K-K^{(0)}\right\|_F+\eta\left\|J\left(\theta^{(0)}\right)^{\top} J\left(\theta^{(0)}\right)-J\left(\tilde{\theta}^{(t)}\right)^{\top} J\left(\theta^{(t)}\right)\right\|_F \\
&\leq  1-\eta \lambda_{\min }  +\frac{\eta   \lambda_{\min }}{3}+\frac{\eta M^2}{\sqrt[4]{\tilde{d}}}\left(\left\|\theta^{(t)}-\theta^{(0)}\right\|_{2}+\left\|\tilde{\theta}^{(t)}-\theta^{(0)}\right\|_{2}\right) \\
&\leq 1-\frac{\eta   \lambda_{\min }}{3}
\end{align*}
for $\tilde{d} \geq \max \left\{D_1, D_2,\left(\frac{6 M^2 C_0}{  \lambda_{\min }}\right)^4\right\}$.

Therefore, 
$$ \left\| W^* u\left(\theta^{(t+1)}\right)\right\|_2 \leq  \left(1-\frac{\eta \lambda^{\text {min }}}{3}\right)^{t+1-T_1} B^{T_1}R_0.$$

Now we can improve our linear approximation error to be uniform on $t$.

Again, by writing 
\begin{align*}
    &\nabla_w\sum_{j=1}^m l^v_j(\theta_t-\eta \nabla \sum_{i=1}^n  w_t^i l_{i}(\theta_t)) \\
    &= -\eta U(\theta_t) J(\theta_t)^T \Big(J^v(\theta_t) + J^v(\theta_t-\eta \nabla_{\theta}  w_t^T l(\theta_t)) - J^v(\theta_t)\Big) \Big(u^v(\theta_t) + u^v(\theta_t-\eta \nabla_{\theta}  w_t^T l(\theta_t)) - u^v(\theta_t)\Big) \\
    &= -\eta U(\theta_t) \Big[ J(\theta_t)^T J^v(\theta_t) u^v(\theta_t) - J(\theta_t)^T \big(J^v(\theta_t-\eta \nabla_{\theta}  w_t^T l(\theta_t)) - J^v(\theta_t)\big) u^v(\theta_t) \\
    &- J(\theta_t)^T J^v(\theta_t)  \big(u^v(\theta_t-\eta \nabla_{\theta}  w_t^T l(\theta_t)) - u^v(\theta_t)\big) \\
    &+ J(\theta_t)^T \big(J^v(\theta_t-\eta \nabla_{\theta}  w_t^T l(\theta_t)) - J^v(\theta_t)\big)\big(u^v(\theta_t-\eta \nabla_{\theta}  w_t^T l(\theta_t)) - u^v(\theta_t)\big)\Big]
\end{align*}
and with the bounds 
$$ \|J^v(\theta_t-\eta \nabla_{\theta}  w_t^T l(\theta_t)) - J^v(\theta_t)\|_F \leq \frac{M}{\sqrt[4]{\tilde{d}}}\|\eta \nabla_{\theta}  w_t^T l(\theta_t)\|_2 \lesssim \frac{M^2}{\sqrt[4]{\tilde{d}}}\eta,$$
$$ \|u^v(\theta_t-\eta \nabla_{\theta}  w_t^T l(\theta_t)) - u^v(\theta_t)\|_2 = \| \eta J^v(\tilde{\theta})^T \nabla_{\theta}  w_t^T l(\theta_t) \|_2 \lesssim \eta M^2,$$
we have 
\begin{align*}
    & \| J(\theta_0)^T J^v(\theta_0) u^v(\theta_t) -  \Big[ J(\theta_t)^T J^v(\theta_t) u^v(\theta_t) - J(\theta_t)^T \big(J^v(\theta_t-\eta \nabla_{\theta}  w_t^T l(\theta_t)) - J^v(\theta_t)\big) u^v(\theta_t) \\
    &- J(\theta_t)^T J^v(\theta_t)  \big(u^v(\theta_t-\eta \nabla_{\theta}  w_t^T l(\theta_t)) - u^v(\theta_t)\big) \\
    &+ J(\theta_t)^T \big(J^v(\theta_t-\eta \nabla_{\theta}  w_t^T l(\theta_t)) - J^v(\theta_t)\big)\big(u^v(\theta_t-\eta \nabla_{\theta}  w_t^T l(\theta_t)) - u^v(\theta_t)\big)\Big]\|_\infty\\
    &\lesssim \| J(\theta_0)^T J^v(\theta_0) u^v(\theta_t) - J(\theta_t)^T J^v(\theta_t) u^v(\theta_t)\|_\infty + \eta M^3 \tilde{d}^{-1/4} + \eta M^4 \\
    &\lesssim \tilde{d}^{-1/4}+\eta.
\end{align*}

Finally, we observe from
$u_{t}^i = u_{t-1}^i - \eta \sum_{j=1}^n K_{ij}w_{t}^ju_{t-1}^j + O(\tilde C \tilde{d}^{-1/4})$ that for noisy sample indices $i$, (1) $w_{t}^j=0$ for noisy sample indices $j$; (2) $sgn(K_{ij}u_t^j)=-sgn(u_t^i)$ for clean indices $j$. As a result, $u_{t}^i$ keeps the sign while obtaining larger magnitude. Similarly, for clean samples $u_{t+1}^i \rightarrow B(0, r)$ monotonically after $T_1$ for some $r = O(\eta + \tilde{d}^{-1/4})$. Therefore, the noisy sample filtering keeps until some $u^v_i \lesssim \eta +\tilde d^{-1/4}$. With the following additional Assumption~\ref{assum:stable_convergence}, the noisy sample filtering phase can be extend until $\| \bm u^v\|_\infty \lesssim \eta +\tilde d^{-1/4}$. 

\begin{assumption}
\label{assum:stable_convergence}
Denote the decomposition of $\bm u^v$ as 
\[
u^v_i(t)=\sum_{j=1}^n s_{ij}\,e^{-\eta\lambda_j t}.
\]
Let $j^\ast:=\arg\min\{\lambda_j>0:\ s_{ij}\neq 0\}$ and define $
s_i^\ast:=s_{ij^\ast}, \lambda_i^\ast:=\lambda_{j^\ast}.
$
Let $t_i$ be the first time such that $|u^v_i(t_i)|\lesssim \eta+\tilde d^{-1/4}$.
Assume for $\forall t\ge t_i$, 
\[
\operatorname{sgn}\!\big(u^v_i(t)\big)=\operatorname{sgn}\!\big(s_i^\ast\big).
\]

\end{assumption}

The noise sample filtering fails for $u^v(\theta_t)$ when the weight update signal is overwhelmed by the approximation error bound, i.e. 
$$ \gamma\sum_k |u^v_k| \lesssim \tilde{d}^{-1/4}+\eta.$$

% Therefore, for noisy samples
% \begin{align*}
%     \frac{1}{m} \eta \alpha U(\theta_t) K(X_{tr}, X_{val})u^v(\theta_t) + O(\eta^2 B^{2T_1}R_0^2\hat{w}\frac{M^3}{\sqrt[4]{\tilde{d}}} + \eta^2 B^{T_1}R_0 M^4\hat{w} + \eta \alpha M^2C_0B^{2T_1}R_0^2/\tilde d^{-1/4}) <0
% \end{align*}
% holds if $u^v_i \geq O\Big(\sqrt{\frac{\eta B^{2T_1}R_0^2JM^4}{m\tau} + \frac{M^2C_0B^{2T_1R_0^2}}{m\tau \sqrt[4]{\tilde{d}}}}\Big)$. Similar result holds for clean samples. Thus \ref{eq: weight_stage2} is proved.
\end{proof}

\section{Proof of Proposition 1}

\begin{proof}
\textbf{Upper bound for $C$:}
Let $s:=K_m\mathbf 1_m$. Since $\mathbf 1_m^\top u^v=0$,
\[
C \;=\; -\frac{1}{m}\,s^\top u^v
    \;=\; -\frac{1}{m}\,(s-c_m\mathbf 1_m)^\top u^v .
\]
By the Cauchy--Schwarz inequality and \eqref{eq:RS-eps},
\[
|C|
\;\le\; \frac{1}{m}\,\|s-c_m\mathbf 1_m\|_2\,\|u^v\|_2
\;=\; \frac{1}{m}\cdot O_p\!\big(m^{1-\varepsilon}\big)\cdot \sqrt m
\;=\; O_p\!\big(m^{\tfrac12-\varepsilon}\big).
\]

\textbf{Lower-order growth for $S_i$:}
Assume $x_i$ is independent of $X_{\mathrm{clean}}$. Define the row-wise mean
$\mu_K(x_i):=\mathbb E_{X'\sim\mathcal D}[K(x_i,X')]$
and the centered row vector
$k_i^\circ := K(x_i,X_{\mathrm{clean}})-\mu_K(x_i)\,\mathbf 1_m^\top\in\mathbb R^m$.
Because $\mathbf 1_m^\top u^v=0$,
\[
S_i \;=\; K(x_i,X_{\mathrm{clean}})\,u^v
      \;=\; \big(k_i^\circ+\mu_K(x_i)\mathbf 1_m^\top\big)u^v
      \;=\; (k_i^\circ)^\top u^v .
\]
Let $Z_j:=K(x_i,X^{\mathrm{clean}}_j)-\mu_K(x_i)$ for $j=1,\dots,m$.
Conditional on $x_i$, the variables $Z_1,\dots,Z_m$ are i.i.d., satisfy $\mathbb E[Z_j\mid x_i]=0$ and $|Z_j|\le 2K_{\max}$, and
$\mathrm{Var}(Z_j\mid x_i)\ge \sigma^2$ by the non-degeneracy assumption. Since $S_i=\sum_{j=1}^m u^v_j Z_j$ with $|u^v_j|=1$,
\[
\mathbb E\big[S_i^2\mid x_i\big]=\sum_{j=1}^m (u^v_j)^2\,\mathrm{Var}(Z_j\mid x_i)\ \ge\ m\,\sigma^2.
\]
Moreover, boundedness yields a finite fourth moment: there exists a constant $C_4>0$ (depending only on $K_{\max}$) such that
$\mathbb E[S_i^4\mid x_i]\le C_4\,m^2 K_{\max}^4$.
Applying the Paley--Zygmund inequality to $S_i^2$ conditional on $x_i$ with parameter $\gamma=\tfrac12$ gives
\[
\mathbb P\!\left(\,|S_i|\ \ge\ \tfrac12\,\sqrt{\mathbb E[S_i^2\mid x_i]}\ \middle|\ x_i\right)
\ \ge\ (1-\tfrac12)^2\,\frac{\mathbb E[S_i^2\mid x_i]^2}{\mathbb E[S_i^4\mid x_i]}
\ \ge\ \frac{\tfrac14\,m^2\sigma^4}{C_4\,m^2 K_{\max}^4}
\ =:\ c_1 \in (0,1),
\]
hence $\mathbb P\!\left(|S_i|\ge (\sigma/2)\sqrt m\right)\ge c_1$.

For a high-probability lower bound, set $V_i^2:=\mathrm{Var}(S_i\mid x_i)=\sum_{j=1}^m (u^v_j)^2\,\mathrm{Var}(Z_j\mid x_i)\ge m\sigma^2$.
By Berry--Esseen (bounded third moments since $|Z_j|\le 2K_{\max}$),
\[
\sup_{x\in\mathbb R}\left|\mathbb P\!\left(\frac{S_i}{V_i}\le x \,\middle|\, x_i\right)-\Phi(x)\right|
\ \le\ \frac{C_{\mathrm{BE}}\,\sum_{j=1}^m \mathbb E\!\left[|u^v_j Z_j|^3\mid x_i\right]}{V_i^3}
\ \le\ \frac{C'}{\sqrt m},
\]
for a constant $C'$ depending only on $K_{\max}$ and $\sigma$.
Let $\tau_m:=m^{-\varepsilon/2}$. Since $V_i\ge \sigma\sqrt m$,
\[
\mathbb P\!\left(|S_i|\le \tau_m\sqrt m \,\middle|\, x_i\right)
\ \le\ 2\Phi\!\left(\frac{\tau_m\sqrt m}{V_i}\right)-1+\frac{C'}{\sqrt m}
\ \le\ \frac{2}{\sqrt{2\pi}}\frac{\tau_m\sqrt m}{V_i}+\frac{C'}{\sqrt m}
\ \le\ \frac{2}{\sqrt{2\pi}\,\sigma}\,m^{-\varepsilon/2}+\frac{C'}{\sqrt m}.
\]
Taking expectation over $x_i$ yields
\[
\mathbb P\!\left(|S_i|\le m^{\tfrac12-\tfrac{\varepsilon}{2}}\right)\ \le\ C''\,m^{-\varepsilon/2}+C'/\sqrt m\ \xrightarrow[m\to\infty]{}\ 0,
\]
for a constant $C''>0$. Consequently,
\[
|S_i|\ \ge\ m^{\tfrac12-\tfrac{\varepsilon}{2}}\quad\text{with probability }1-o(1),
\]
and in particular $|S_i|=\Omega_p(\sqrt m)$.

\medskip
\noindent\emph{Combining with the bound for $C$:}
Since $|C|=O_p\!\big(m^{\tfrac12-\varepsilon}\big)$ and $|S_i|\ge m^{\tfrac12-\tfrac{\varepsilon}{2}}$ with probability $1-o(1)$, it follows that
\[
\frac{|C|}{|S_i|}
\ =\ O_p\!\Big(\frac{m^{\tfrac12-\varepsilon}}{m^{\tfrac12-\tfrac{\varepsilon}{2}}}\Big)
\ =\ O_p\!\big(m^{-\tfrac{\varepsilon}{2}}\big)
\ \xrightarrow{p}\ 0.
\]
\end{proof}

\section{Implementation Details and Additional Results}
\label{sec:exp_detail}
\subsection{Implementation Details}
\emph{Architectures and hyperparameters.}
Network backbones and all optimization hyperparameters for our method and baselines match those in \cite{kim2021fine} to ensure fair comparison. Specifically, we use ResNet-34 models for CIFAR-10 and CIFAR-100.  We use the same set of hyper-parameters for CIFAR-10 and CIFAR-100. During training, we set a batch size of 128. We use SGD with a weight decay $5\times 10^{-4}$ and a momentum of $0.9$. The learning rate is initialized as $0.02$ and decrease by a factor of $10$ at epochs 40, 80 and 100. For Clothing 1M, we compare with SCE, ELR \citep{liu2020early}, DivideMix \citep{li2020dividemix}, CORES \citep{cheng2020learning}, FINE \citep{kim2021fine}, BHN \citep{yu2023delving}, RENT \citep{bae2024dirichlet}. We use ResNet-50 with weights pre-trained on ImageNet. The batch size is set to $64$. We use SGD with initial learning rate $0.01$, momentum $0.9$, and weight decay $5\times 10^e{-4}$. We train the neural network for $10$ epochs. 

\subsection{Additional Results}
\label{sec:appdxCIFARN}
Comparison of sample selection/reweighting methods on CIFAR-10n/100n are provided in Table~\ref{tab:cifar_n}.
For meta-learning, Table~\ref{tab:meta} reports MW-Net baselines under last-epoch and best-epoch selection. As shown in the table, MW-Net exhibits severe overfitting under label noise.
\providecommand{\acc}[2]{#1}

\begin{table*}[t]
  \centering
  \caption{Test Accuracy on CIFAR-10N/100N.}
  \renewcommand{\arraystretch}{1.15}
  \setlength{\tabcolsep}{6pt}
  \begin{tabular}{l ccccc c}
    \toprule
    \multirow{2}{*}{Method} & \multicolumn{5}{c}{CIFAR-10N} & CIFAR-100N \\
    \cmidrule(lr){2-6}\cmidrule(lr){7-7}
     & Aggre & Ran1 & Ran2 & Ran3 & Worst & Noisy \\
    \midrule
    Forward      & \acc{88.2}{?} & \acc{86.9}{?} & \acc{86.1}{?} & \acc{87.0}{?} & \acc{79.8}{?} & \acc{57.0}{?} \\
    JoCoR      & \acc{91.4}{?} & \acc{90.3}{?} & \acc{90.2}{?} & \acc{90.1}{?} & \acc{83.3}{?} & \acc{60.0}{?} \\
    CORES      & \acc{91.2}{?} & \acc{89.7}{?} & \acc{89.9}{?} & \acc{89.8}{?} & \acc{83.6}{?} & \acc{61.2}{?} \\
    CO-Teaching      & \acc{91.2}{?} & \acc{90.3}{?} & \acc{90.3}{?} & \acc{90.2}{?} & \acc{ 83.8}{?} & \acc{60.4}{?} \\
    CO-Teaching++      & \acc{90.6}{?} & \acc{89.7}{?} & \acc{ 89.5}{?} & \acc{89.5}{?} & \acc{83.3}{?} & \acc{57.9}{?} \\
    FINE      & \acc{91.0}{?} & \acc{90.1}{?} & \acc{89.9}{?} & \acc{90.1}{?} & \acc{81.1}{?} & \acc{58.4}{?} \\
    BHN & \acc{86.1}{?.?} & \acc{84.9}{?.?} & \acc{84.9}{?.?} & \acc{84.8}{?.?} & \acc{78.4}{?.?} & \acc{56.9}{?.?} \\
    RNT   & \acc{80.8}{?.?} & \acc{79.1}{?.?} & \acc{78.9}{?.?} & \acc{79.6}{?.?} & \acc{69.7}{?.?} & \acc{53.1}{?.?} \\
    \textbf{Ours}      & \acc{\textbf{92.3}}{?.?} & \acc{\textbf{91.6}}{?.?} & \acc{\textbf{91.3}}{?.?} & \acc{\textbf{91.2}}{?.?} & \acc{\textbf{85.6}}{?.?} & \acc{\textbf{64.7}}{?.?} \\
    \bottomrule
  \end{tabular}
  \label{tab:cifar_n}
\end{table*}

\begin{table}[t]
\centering
%\small
\setlength{\tabcolsep}{6pt}
\caption{Results on CIFAR-10/100 with symmetric and asymmetric label noise.}

%\begin{tabular}{l!{\;\vrule\;}ccc!{\;\vrule\;}ccc}
\begin{tabular}{lcccccc}
\toprule
\textbf{Dataset}
  & \multicolumn{3}{c}{\textbf{CIFAR-10}}
  & \multicolumn{3}{c}{\textbf{CIFAR-100}} \\
%\midrule
\textbf{Noisy Type}
  & \multicolumn{2}{c}{Sym} & \multicolumn{1}{c}{Asym}
  & \multicolumn{2}{c}{Sym} & Asym \\
%\midrule
\textbf{Noise Ratio} & 20 & 50 & 40 & 20 & 50 & 40 \\
\midrule
%\midrule
MW-Net (last)               & \(85.8 \)              &  \(74.5 \)               & \( 77.8 \)              & \(63.0 \)              & \(46.5 \)              & \(45.3 \)              \\
MW-Net (best)               & \(91.2 \)              &  \(85.6 \)               & \(88.6 \)              & \(67.5 \)              & \(58.2 \)              & \(53.4 \)              \\
\textbf{Ours (FBR)}       & \(\bm{92.3}\)     & \({\bm{87.0}}\)     & \(\bm{90.6}\)     &  \(73.4\)      & \(\bm{65.4}\)      & \(\bm{73.2}\)     \\
\textbf{Ours (NTK)}       & \(91.4 \)     & \(86.4 \)     & \(89.7\)     &  \(\bm{73.6 }\)      & \(\bm{65.4 }\)      & \(73.1 \)     \\
\bottomrule
\end{tabular}
\label{tab:meta}
\end{table}
% \section{The Use of Large Language Models}
% We used a large language model (ChatGPT) only to aid with grammar, wording, and stylistic polishing of text. 
% All ideas, results, and claims are our own; we manually verified factual statements and citations. 
% Only non-sensitive draft text was provided to the tool, and all outputs were reviewed and edited by the authors. 
% Any remaining errors are our responsibility.
\end{document}